\newcommand{\VK}[1]{\textcolor{black}{#1}}
\newtheorem{Definition}{Definition}
\newtheorem{Example}{Example}
\newtheorem{Lemma}{Lemma}
\definecolor{newcolor}{rgb}{.8,.349,.1}
\journal{Knowledge-based Systems}
\begin{document}
	\begin{frontmatter}
		\title{Inductive Conformal Recommender System}
		
		\author[nit]{Venkateswara Rao Kagita}
		\ead{585venkat@gmail.com}
		\author[mec]{Arun K Pujari}
		\ead{akpujari@curaj.ac.in}
		\author[hcu]{Vineet Padmanabhan}
		\ead{vineetcs@uohyd.ernet.in}
		\author[DU]{Vikas Kumar~\corref{cor1}}
		\ead{vikas007bca@gmail.com}
		\cortext[cor1]{Corresponding author}
		%
		\address[nit]{National Institute of Technology, Warangal, India}
		\address[mec]{Mahindra University,  Hyderbad, India~~~~~~~~~~~~~~~~~} 
		\address[hcu]{University of Hyderabad, Hyderbad, India~~~~~~~~~~~~~}
		\address[DU]{University of Delhi, Delhi, India~~~~~~~~~~~~~~~~~~~~~~~~~}
		
		
		\begin{abstract}
			Traditional recommendation algorithms develop techniques that can help people to choose desirable items. However, in many real-world applications, along with a set of recommendations, it is also essential to quantify each recommendation's (un)certainty. The conformal recommender system uses the experience of a user to output a set of recommendations, each associated with a precise confidence value. Given a significance level $\varepsilon$, it provides a bound $\varepsilon$ on the probability of making a wrong recommendation. The conformal framework uses a key concept called \emph{nonconformity measure} that measures the strangeness of an item concerning other items. One of the significant design challenges of any conformal recommendation framework is integrating nonconformity measures with the recommendation algorithm. This paper introduces an inductive variant of a conformal recommender system. We propose and analyze different nonconformity measures in the inductive setting. We also provide theoretical proofs on the error-bound and the time complexity. Extensive empirical analysis on ten benchmark datasets demonstrates that the inductive variant substantially improves the performance in computation time while preserving the accuracy. 
			
			\vspace{0.2cm}
			\noindent Keywords:~ Recommender System, Inductive Conformal Prediction, Conformal Recommender System
		\end{abstract}

	\end{frontmatter}
	
	\section{Introduction}	
	Recommending quality services to improve customer satisfaction is of prime concern for the overall success of any online community. In this context, an automatic recommendation has become even more indispensable. Recommender systems are software tools that use past behaviour (usage information) of individuals to provide personalized recommendations for a large variety of available products such as movies, books, music, etc. There have been numerous research proposals on recommendation problem focusing on improving recommendation accuracy~\cite{cheng2016wide,karatzoglou2017deep, kumar2017collaborative}. With the upcoming importance on accountability and explainability of AI techniques, deployment of a plain recommendation whatsoever accurate it may be on a testing platform will not be satisfactory without value additions such as explanations, confidence, or sensitivity. Among the desirable features of the future of recommender systems, providing a confidence measure (or, equivalently, a probable error bound) on recommendation is essential. Most of the existing recommender systems do not offer any such measure to indicate the level of confidence till very recently when the present authors propose \textit{Conformal Recommender Systems (CRS)}~\cite{kagita2017conformal}. Though some of the earlier systems endeavour to provide confidence ~\cite{mazurowski2013estimating, hernando2013incorporating,mclaughlin2004collaborative}, the confidence values so provided are not related to or bound to the error values. On the other hand, \textit{Conformal Recommender Systems (CRS)} satisfies a \emph{validity} property  that ensures  that the error value does not exceed a predetermined significance level $\varepsilon$. In other words, the correctness-confidence of a recommendation is 1-$\varepsilon$. It is observed that though CRS is an advancement in research in the area of Recommender Systems, the underlying process is computationally intensive and expensive. Having established the point that a \emph{valid} quantitative measure of confidence can be computed using the principles of conformal prediction, the need arises to provide a computationally efficient method of accomplishing this task. The objective of the present work is to investigate  efficient alternative techniques retaining the strength of CRS. This paper proposes an \textit{inductive} variant of a conformal recommender system that is computationally efficient and retains the validity property of CRS.

	For a set of training examples $S = \{z_{1}, \ldots, z_{n}\}$, where  $z_{i}$ is a pair $(x_{i}, y_{i})$ with $x_{i} \in \mathbb{R}^{d}$ is a vector of $i^{th}$ example and $y_{i}$ is the corresponding class label,  any common predictor predicts a class label $y_{n+i}$ for  unclassified objects $x_{n+i}, i\ge 1$. In contrast, conformal predictors give a set of class labels as prediction regions and corresponding probability-bounds of error. A (1-$\varepsilon$) confidence prediction region is defined by the probability that the correct label is not in the prediction region does not exceed $\varepsilon$. To predict the class label of an unclassified object, say $x_{n+1}$ to one of the class labels, say $y_c$, based on the available information in terms of $S$,  conformal predictors define suitable numeric measure to compute a nonconformity measure for each pair of  training example and class-label.
	%
	Intuitively, it is a measure of how well an unknown data $x_{n+1}$ conforms to any training example $x_{i}$ when $x_{n+1}$ is assigned class label $y_c$. This is done by measuring the change in predicting behaviour of $S$ when $z_i$ is replaced by $z_{n+1}$. The predicting behaviour is observed by applying any of the conventional predictors. The nonconformity measures for all such pairs are analysed to compute  $p$-values and then to determine (1-$\varepsilon$) region subsequently. 
	
	Two important observations can be made from the foregoing discussion. First, the process is hinged on the definition of suitable nonconformity measure. We observe that depending on the context, it is sometimes easier to use a conformity measure instead of a nonconformity measure, though both processes are equivalent intuitively. For the sake of notational convenience, we use the term nonconformity measure to refer to both situations. Second, the measure is required to be computed for all pairs of $x_{i}$ and $y_c$ in order to determine the p-values. 
	In order to show the relevance and applicability of the principle of conformal prediction, 
	a nonconformity measure is introduced by Kagita et al.~\cite{kagita2017conformal} in the context of recommender systems using precedence information.  Based on the rating data of a set of users on a set of items, a nonconformity measure is calculated for all possible recommendations by examining how well the tentative recommendation conforms to all other known recommendations and earlier ratings for any user. The underlying algorithm uses precedence mining as proposed in ~\cite{ParameswaranKBG10}.  A different nonconformity measure is defined in~\cite{himabindu2018conformal}, wherein the matrix factorization is used as the  underlying algorithm. 
	
	The main contributions of the present work are as follows. First, we analyze different possibilities of defining nonconformity measures in the context of inductive CRS by using the \emph{precedence relations} among items. As stated earlier, defining a conformity measure is observed to be more relevant than a nonconformity measure in some situations. We adopt different probability measures using pairwise precedence statistics characterized by Parameswaran et al.~\cite{ParameswaranKBG10} for defining suitable (non)conformity measures. Second, we introduce the concept of \emph{inductive conformal recommendation}, which is a computationally efficient alternative to the CRS framework. 
	Further, we theoretically and empirically establish the crucial properties of the conformal framework, i.e., validity and efficiency. To verify its efficacy, we conducted
	extensive experiments 
	on seven bench-mark datasets using various standard evaluation metrics. We show that the proposed inductive conformal recommender system improves the computational time while retaining a similar level of accuracy. 
	
	The rest of the paper is structured as follows. In Section~\ref{sec:relatedWork}, we briefly discuss the related work. Section~\ref{sec:FC} describes the key concepts required to build the proposed system. Section~\ref{sec:conformalPrediction} presents the background on conformal prediction framework.  In Section~\ref{sec:PM}, we discuss the underlying precedence mining based recommender systems. We discuss the existing conformal recommender system in Section~\ref{sec:crsIS}. We introduce the proposed inductive conformal recommender system in Section~\ref{sec:ICRS}. We report experimental results in Section~\ref{sec:Exp}. Finally, Section~\ref{sec:conclusion} concludes and indicates several issues for future work.

	\section{Related Work: Confidence Measure in Recommender System}
	\label{sec:relatedWork}
	Recommender systems are generally employed to provide tailor-made suggestions that can assist the user in decision making~\cite{ resnick1997recommender, lu2015recommender}. These systems exploit the user's consumption experience collected via implicit or explicit feedback data to infer their preferences~\cite{oard1998implicit, kumar2017proximal, margaris2019improving}. However, most of these systems are less transparent because of the unavailability of confidence with which an item is recommended~\cite{himabindu2018conformal, ayyaz2018hcf}. Despite the enormous application of recommender systems, a limited number of methods are available that associate confidence value with the recommendations. In this section, a brief review of the earlier work concerning confidence measures in the recommender system is presented. Readers' familiarity with recommender system is assumed here. 
	
	To measure the effect of confidence and uncertainty measures, McNee et al.~\cite{mcnee2003confidence} involved an elementary confidence computation in existing collaborative filtering algorithms and have shown that a confidence display increases user satisfaction. In~\cite{mclaughlin2004collaborative}, the authors have considered the previously collected user's rating as noisy evidence of the user's actual rating and proposed a Belief Distribution algorithm that explicitly outputs the uncertainty in each predicted rating along with the predicted rating value. Adomavicius et al.~\cite{adomavicius2007towards} proposed a rating variance-based confidence measure to refine the prediction generated by any traditional recommendation algorithm. Symeonidis et al.~\cite{symeonidis2008providing} constructed a feature profile of each user, and then the prediction is justified by considering the correlation between users and features. Shani et al.~\cite{shani2011evaluating} suggested measuring the significance level of recommendation by running a significance test between the results of different recommender algorithms. OrdRec~\cite{koren2011ordrec} provides a richer expressive power by producing a full probability distribution of the expected item ratings. Mazurowski~\cite{mazurowski2013estimating} compared the concept of confidence in collaborative filtering with similar concepts in other fields within machine learning. Bayesian confidence intervals-based evaluation method has been proposed to measure recommendation algorithms' performance. The author also proposed three different resampling-based methods to estimate the confidence of individual predictions~\cite{mazurowski2013estimating}. In~\cite{hernando2013incorporating}, for a target user, the confidence in prediction for an item is defined based on \textit{k}-nearest neighbors of the user. 	In~\cite{ayyaz2018hcf}, a content-based fuzzy recommendation model is proposed that utilize \textit{similarity} and \textit{dissimilarity} score between user and item for the rating prediction task. For every unknown (user, item)-pair, the prediction confidence is computed based on the difference between the actual ratings given by that user and their corresponding predictions by the fuzzy model. A recommendation model is proposed in~\cite{gohari2018new} to integrate the trust and certainty information for confidence modeling. Mesas et al.~\cite{mesas2020exploiting} explored the prediction confidence from the perspective of the system. The idea is to embed awareness into the recommendation models that help in deciding the more reliable suggestions rather than all potential recommendations. A Course Recommender system is proposed in~\cite{morsomme2019conformal}, where a course-specific regression model is trained over the course contents and students’ academic interests for the grade predictions. To complement the model predicted grades, the authors have employed an Inductive Confidence Machine (ICM)~\cite{papadopoulos2002inductive} to construct prediction intervals attune with each student. In~\cite{himabindu2018conformal}, two variants of conformal framework, namely transductive and inductive, are proposed in the matrix factorization (MF) setting that associate a confidence score to each predicted rating.  The method proposed in~\cite{himabindu2018conformal} can be seen as a two-stage procedure. At first, a MF-based model is applied over the partially filled rating matrix to get the rating prediction for each (user, item)-pair. These predictions are then used to calculate the confidence score for individual predicted ratings.   A confidence-aware MF model is proposed in~\cite{wang2018confidence}, which can be seen as a comprehensive framework that optimizes the accuracy of rating prediction and estimates the confidence over predicted rating simultaneously. Costa et al. \cite{ da2019boosting} proposed an ensemble-based co-training approach for the rating prediction problem. In the co-training phase, two or more recommender algorithms are trained to predict the rating for all unobserved user-item pairs. The training set for the next iteration of the co-training is then augmented with the $M$ most confident predictions. The confidence is calculated based on the deviation from the baseline estimate and the rating predicted by the recommendation algorithm.
	\emph{However, none of these works provide confidence to the recommendation set. They focus on providing confidence to the individual rating prediction, and it is non-trivial and cumbersome to obtain the confidence of recommendation from confidence regions of point predictions. }
	
	In this work, we focus on providing confidence to the recommendation, not for rating prediction. The only work that focuses on providing confidence to the recommendation is our previous work on conformal recommender system~\cite{kagita2017conformal}, wherein a conformal framework is introduced for the recommender systems, and  a new nonconformity measure is proposed for the conformal recommender system. It is also shown that the proposed nonconformity satisfies the desirable properties of conformal prediction, such as \textit{exchangeability}, \textit{validity}, and \textit{efficiency}. Nonetheless, the framework proposed in~\cite{kagita2017conformal} suffers from similar shortcomings of traditional conformal predictions and requires high computation times. 
	We briefly describe the approach in the following section.

	\section{Foundational Concepts}
	\label{sec:FC}
	In this section, we first introduce the basic concepts related to \emph{conformal prediction}, the main framework we use to build  our 
	proposed confidence-based recommender system. We then give a brief description of \emph{precedence mining}, a collaborative filtering model, on which we apply our conformal prediction framework for producing confidence-based recommendations.
	
	\subsection{Conformal Prediction}
	\label{sec:conformalPrediction}
	In this section,  a brief account of the principle of conformal prediction is reported in order to provide the relevant background. 
	We start with a training example $z_{i}$ as a pair $(x_{i}, y_{i})$ with $x_{i} \in \mathbb{R}^{d}$ is a feature vector of $i^{th}$ example and $y_{i}$ is the corresponding class label. 
	Given the training set $S = \{z_{1}, \ldots, z_{n}\}$, a prediction or classification task is to predict a class label $y_{n+i}$ for  unclassified objects $x_{n+i}, i\ge 1$. A conformal predictor provides a subset of class labels for each unclassified object $x_{n+i}$ and the error that the correct label is not in this set does not exceed $\varepsilon$. Let us consider one unclassified object $x_{n+1}$ and the task is to examine whether a class label $y_c$ is a member of $(1-\varepsilon)-$prediction region. 
	Let $z^{c}_{n+1} = (x_{n+1}, y_c)$, where  $y_c$ is tentatively assigned to $x_{n+1}$. The nonconformity measure for an example $z_i \in \{S\cup z^{c}_{n+1}\}$ is a measure of how well  $z_i$ conforms to $\{S\cup z^{c}_{n+1}\} \setminus z_i, \forall i\in [1,n+1]$.  From another point of view, it can be seen as a measure of how well $z^{c}_{n+1}$ conforms to $z_{i}\in S$. This is done by measuring the change in predicting behaviour of $S$ when $z_i$ is replaced by $z^{c}_{n+1}$. The $p$-value is the proportion of $z_{i}\in S$ having nonconformity score worse than that of $z^{c}_{n+1}$ for all possible values of $y_c$ (all class labels). The set of labels whose $p$-value higher than $\varepsilon$ forms $(1-\varepsilon)-$prediction region. Intuitively, the predicting behaviour is observed by applying any of the conventional predictors which uses S as the training set. The conformal prediction algorithm makes $(n+1)\times n_c\times C$ calls to the underlying prediction algorithm, where $n_c$ is the number of candidate items, and $C$ is the number of class labels. 
	The conformal prediction framework has been well-studied from different perspectives in recent years~\cite{papadopoulos2008inductive, zenebe2005personalized,shafer2008tutorial,vovk2005algorithmic}.
	
	On the other hand, the \emph{inductive conformal framework} avoids the computational overhead~\cite{papadopoulos2008inductive} of initial proposal of conformal prediction. 	In an inductive setting the training set $S = \{z_{1}, \ldots, z_{n}\}$ is divided into two sets, namely \emph{proper} training set $S^t = \{z_1, z_2, \ldots, z_m\}$ and \emph{calibration} set $S^c = \{z_{m+1}, \ldots, z_{m+l}\}$, $n = m+l$. The former is used to learn the prediction model and the latter is used for computation of $p$-values. The system uses an underlying conventional prediction algorithm to learn a model using proper training set $S^t$. The same model is then used to determine (non)conformity measures and $p$-value for every example in $S^c$ and $z_{n+1}$ with respect to $S^t$. As a result, the framework learns the underlying model only once, leading to a significant reduction in computation time and effort. 
	
	\begin{Example}
		Consider a  problem of classifying samples as cancerous ($+ve$) or noncancerous ($-ve$) based on the tumor size and other pathological features. Let $x_i$ be the feature vector describing an $i^{th}$ instance, and $y_i\in\{+ve, -ve\}$ is the corresponding label. Let  $S=\{(x_1, y_1), (x_2, y_2), \ldots, (x_{10}, y_{10})\}$ be the training set containing observations of ten different individuals. Given the new instance, say $x_{11}$, the task is to classify it as either $+ve$ or $-ve$.  
		Assume, Support Vector Machine (SVM) is the underlying classifier that \VK{also} assigns a nonconformity value as the deviation between the actual and the predicted class label.
		The conformal prediction framework labels the new instance with all possible classes and sees which conforms more to the existing ones. At first, it considers $z_{11}^{+ve} = (x_{11}, +ve)$ and adds it to the training set. After that the the SVM classifier is trained for each $i^{th}$ instance with the training set $\{S\cup z_{11}^{+ve}\} \setminus z_i, \forall i\in [1,11]$ and measures the removed example's nonconformity ($\alpha_i$).
		Finally, it calculates the p-value concerning the label $C$ (let's say $P^{+ve}$), $P^{+ve} = \frac{\lvert  i|1\le i\le11, \alpha_i\ge \alpha_{11}\rvert}{11}$. 
		The conformal predictor repeats the same procedure concerning another label $NC$, i.e., for $(x_{11}, -ve)$ and determines the corresponding p-value ($P^{-ve}$). The prediction region includes the labels with a p-value greater than the significance level $\varepsilon$. We can also observe that for the given example, the conformal predictor requires training of $22$ $\big(11\times 2 = (n+1)\times C\big)$ SVMs for one candidate item and hence, for $n_c$ candidate items it would be $(n+1)\times C \times n_c$ SVMs.
		
		On the other hand, the Inductive Conformal Predictor divides the dataset into two sets, namely proper training set $S^t$ and calibration set $S^c$. 
		It then trains the underlying model with $S^t$ and uses the same to evaluate the nonconformity of $S^c$ and new instance $x_{11}$. Hence, only one SVM classifier is learnt using the set $S^t$ and called for $\lvert S^c\rvert + C$ times. For $n_c$ candidates it is $n_c\times (\lvert S^c\rvert + C)$, which is a drastic improvement over conformal predictor in terms of computation time. 
	\end{Example}
	
	\subsection{Precedence Mining based Recommender Systems~\cite{ParameswaranKBG10}}
	\label{sec:PM}
	The precedence mining model~\cite{ParameswaranKBG10} is a Collaborative Filtering (CF) based model that maintains precedence statistics, i.e., the temporal count of all the pairs of items.
	The precedence mining model estimates the probability of future consumption based on past behaviour. For example, a person who has seen \textit{Godfather I} is more likely to watch \textit{Godfather II} in the future. 
	In most of the traditional CF techniques, the aim is to find users having similar profiles as the active user $u$, and then restrict its search to items consumed by this subset of users and not consumed by $u$.  Thus, certain consumption patterns of items exhibited by the whole set of users are not captured as the search is restricted.  The precedence mining model overcomes these shortcomings and  attempts to capture pairwise precedence relations frequently occurring among all users. 
	It calculates a recommendation score for each item based on the precedence statistics, and then the set of items having scores greater than the threshold are recommended. 
	\begin{Example}
		Figure~\ref{fig:ch2_ex1} shows the difference between traditional collaborative
		filtering and precedence mining. The leftmost table in the figure is a toy example
		in which we provide the profiles of different users. Let $u_a$ denote the active user $u_a$. Each row in the table can be interpreted as 
		a sequence of movies that the user has watched. For instance, $u_a$ has watched 
		movies $m_1,~ m_2,~ and~ m_3$ in the given order. The figure in the middle demonstrates the working of collaborative filtering. Here, we assume that the set of users who have at least two movies in common with the active user are in its neighbours. The  most popular movie among the neighbours are then recommended to the active user. 
		A careful observation of Figure~\ref{fig:ch2_ex1} reveals that the movie $m_5$ is popular among the neighbours $u_1$ and $u_2$
		and therefore collaborative filtering recommends movie $m_5$ to the user $u_a$. It is to be noted that the search of items in collaborative filtering is limited to 
		the neighbours space. In contrast, precedence mining looks for patterns in which one item follows the other in the whole user space. The rightmost image in the Figure~\ref{fig:ch2_ex1} demonstrates the idea of precedence mining. We highlight the patterns that occur at least thrice using different colors, for instance, $m_1$ and $m_7$. 
\begin{figure}[!ht]
	\begin{minipage}[b]{0.31\linewidth}
		\centering
		\includegraphics[scale = 0.37]{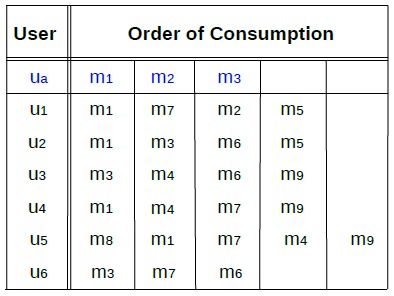}
	\end{minipage}
	\hfill
	\begin{minipage}[b]{0.31\linewidth}
		\centering
		\includegraphics[scale = 0.37]{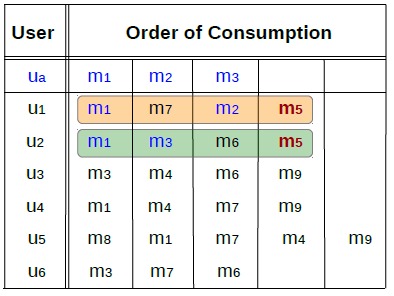}
	\end{minipage}
	\hfill
	\begin{minipage}[b]{0.31\linewidth}
		\centering
		\includegraphics[scale = 0.37]{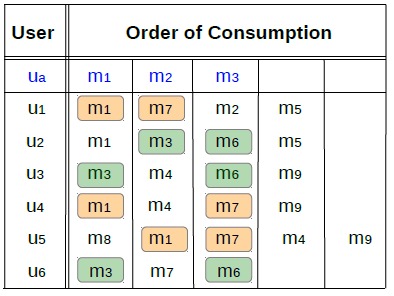}
	\end{minipage}
	\caption{Comparison of collaborative filtering (middle) and precedence mining (right) approaches for a toy example (left).  }
	\label{fig:ch2_ex1}
\end{figure}
\end{Example}

Recommender systems based on
precedence relations is concerned with
mining precedence relations among items  consumed by users and thereafter recommends new items having high \emph{relevance score} computed using precedence statistics. The nicety and novelty of this approach is the use of pairwise precedence  relations  between items. We describe the score computation formally as follows.

Let $O= \{o_{1}, o_{2}, \ldots , o_{mobject}\}$ be the set of items and $U = \{u_{1}, u_{2}, \ldots ,u_{muser}\}$ be the set of users. $profile(u_{j})$ is a sequence of items  known
to have been consumed by user $u_{j}$.  Let $O_{j}$ be the set of items consumed by $u_j$.
A recommender system is concerned with recommending items to a user based on profiles of different users. A recommender system aims at selecting items for recommendation such that these items are absent in
$profile(u)$ and are expectedly preferred to
other items by the user for whom it is recommended.
Let $Support(o_i)$ be the number of users that have consumed item $o_{i}$ and $PrecedenceCount(o_i, o_h)$ be the number of users having consumed item $o_{i}$ preceding
$o_{h}$. The precedence probability for item $o_{i}$ preceding $o_{h}$ is denoted as $PP(o_{i}|o_h)$. We define  $PP(o_{i} | o_{h})$, and $Score(o_{i},
u_{j})$ as follows.
\begin{equation}
PP(o_i|o_h) = \frac{PrecedenceCount(o_i, o_h)}{Support(o_h)},
\end{equation}
\begin{equation}
Score(o_i, u_j) =  \frac{Support(o_i)}{muser} \times \underset{o_l \in
	O_j}{\operatorname{\prod}}PP(o_l|o_i).
\label{eq4}
\end{equation}
The objects with high score  are recommended.
If the score for a  given unutilized item is low, it is highly unlikely to be of interest to the user.
We now consider an example which illustrates the working of the preceding precedence mining based recommender system. 
\begin{Example}
Consider the following \emph{PrecedenceCount} and \emph{Support} statistics calculated based on the preferences given by thirty users $U =
\{u_1, u_2, \ldots, u_{30}\}$ over ten items $O = \{o_1, o_2, \ldots, o_{10}\}$.

\[
PrecedenceCount = \left[
\begin{array}{cccccccccc}
	0 & 9 & 8& 11&  7 & 8 & 6 & 7 & 7 & 3 \\
	8 & 0& 10 &11 & 9 & 7 & 7 & 6 & 8 & 4 \\
	8 & 8 & 0 & 5&  7 & 6 & 5 & 6 & 4 & 3 \\
	5 & 11 & 12 & 0 & 6 & 8 & 6 & 6 & 3 & 2 \\
	7 & 9 & 7 & 13 & 0 & 8 & 7 & 6 & 9 & 4 \\
	5 & 9 &6 & 8 & 6 & 0 & 5 & 6  & 4 & 1 \\
	4 & 6 & 5 & 7 & 5 & 5 & 0 & 4 & 4 & 1 \\
	4 & 8 & 6 & 10 & 8 & 6 & 5 & 0 & 7 & 2 \\
	7 & 11 & 10 & 16 & 7 & 7 & 6 & 5 & 0 & 3 \\
	2 & 1 & 0 & 2 & 1 & 2 & 1 & 3 & 1 & 0 \end{array} \right] \]
\[
Support =  \left[
\begin{array}{cccccccccc}
	20 &  25 &  21 &  25 &  22 &  18 &  15 &  18 &  20 &  6
\end{array}\right].
\]

\noindent Let $u_1$ be the target user and $O_1 = \{o_1,~ o_3,~ o_5,~ o_7,~ o_9\}$ be set of items consumed by $u_1$. For $u_1$, the candidate items for recommendation are $O\setminus O_1 = \{o_2, o_4, o_6, o_8, o_{10}\}$.
The score of an item $o_2$ which not consumed by user $u_1$ is then calculated as
\begin{align*}
	Score(o_2, u_1) &= \frac{Support(o_2)}{30}\times PP(o_1\mid o_2)\times PP(o_3\mid o_2)\times PP(o_5\mid o_2)\times PP(o_7\mid o_2) \times PP(o_9\mid o_2)\\
	&= \frac{25}{30} \times \frac{9}{25}\times\frac{8}{25}\times \frac{9}{25}\times \frac{6}{25}\times  \frac{11}{25} = 0.0036.
\end{align*}

\noindent Similarly,  $Score(o_4, u_1) = 0.0068$,  $Score(o_6, u_1) = 0.0043$, $Score(o_8, u_1) = 0.0016$, and
$Score(o_{10}, u_1) = 0.0028$. Hence, it ranks the items in the order of $o_4,~ o_6,~ o_2,~ o_{10}$, and $o_{8}$. 
\end{Example}

The problem with this approach is even if one of the precedence probabilities (PPs) is zero, the whole score becomes zero. To avoid this problem, Parameswaran et al.~\cite{ParameswaranKBG10}  proposed to consider only top-I precedence probabilities in the product term, where $I$ is a hyper-parameter to tune. In our experiments, we have fine tune the value $I$  to be $1$.

\section{Conformal Recommender System}
\label{sec:crsIS}
The principle of conformal prediction is applied to recommender system in~\cite{kagita2017conformal}. Here, we briefly report the proposal of CRS.  The readers are requested to refer~\cite{kagita2017conformal} for details. Let $O$ be the set of items, $n_i=\lvert O \rvert$ be the total number of items, $n_u$ be the number of users and $O_{j} = \{o_1, o_2, \ldots, o_{n}\}$ be the set of items consumed by a user $u_j$. Given $O,~ u_j,~ O_j,$ and the significance level $\varepsilon$, the problem is to recommend a set of items $\Gamma^\varepsilon$ with $(1-\varepsilon)$ confidence. 
%
For a given user $u_j$, 
$O_{j}$ is split into two sets based on the precedence of usage of the items. The first set $O_j^{train} = \{o_1, o_2, \ldots, o_n\}$ is used as the training set.  The set $O_j^{candidates} = \{c_1, c_2, \ldots, c_k\}$, of candidate items that are consumed by $u_j$ after use of items in  $O_j^{train}$ and are not part of the training set. The conformal recommendation process is to determine the confidence measure of recommending a new object $o_{n+1}$ for user $u_j$. The first step of CRS is to see how well the object $o_{n+1}$ and the training set $O_j^{train}$ conform to each other. 
Let $O_j^{train+} = \{O_j^{train} \bigcup o_{n+1}\}$ be the appended set.
Nonconformity measure is computed for each $o_{h} \in O_j^{train+}$ by ignoring $o_{h}$ in $O_j^{train+}$ and examining the recommendability of $c_i$ when the profile is $O^{h}_{j}$,  where
\begin{equation}
O^{h}_{j} = O_j^{train+} \setminus \{o_{h}\} = \{ o_{1}, o_{2}, \ldots, o_{h-1}, o_{h+1}, \ldots, o_{n+1}\}.
\end{equation}

With precedence mining~\cite{KagitaPReMI,KagitaAAI, KAGITA201515} as the \textit{underlying algorithm}, the measure of recommendability of $c_i$  is a numerical value, $Score(c_i, u_j)$
and higher value of $Score$ implies greater chance of being recommended. The $Score$ is calculated with reference to each of $h$ and for each tentative profile $O^{h}_{j}$, $Score^{h}$ is defined as
\[
\alpha_h = Score^{h}(c_i, u_j) =  \frac{Sup(c_i)}{m_{user}} \times \underset{o_l \in
O^{h}_{j}}{\operatorname{\prod}}PP(o_l|c_i).
\]


\begin{Definition}
(CRS nonconformity measure~\cite{kagita2017conformal}). Given a subset $O_j^{train}$ of user $u_j$ profile;  a set of objects $O_j^{candidates} = \{c_1, c_2, \ldots, c_k\}$, that are consumed by $u_j$ after use of items in  $O_j^{train}$ and are not part of the training set; and a new object  $o_{n+1} \in  O_j$, the nonconformity measure $ \mathcal{A}(o_1, o_2, \dots, o_{n+1})$ w.r.t. $c_i \in O_j^{candidates}$   is $(\alpha_1,\alpha_2,\dots,\alpha_{n+1})$, where $\alpha_h = Score^{h}(c_i, u_j)$.
\end{Definition}
The computed nonconformity scores $\alpha_h, h\in[1, n+1]$  are used to compute the $p$-value as the proportion of examples with $\alpha_h\ge \alpha_{n+1}, h\in[1, n+1]$. A $p$-value is computed for each $c_i \in O_j^{candidates}$ and then we employ two different aggregation techniques to select the final $p$-value from several $p$-values. If the selected $p$-value is greater than $\varepsilon$, then $o_{n+1}$ is included in the $(1-\varepsilon)$ confidence recommendation region. The procedure is repeated for every new item $o_{n+i}, i\ge1$ to get  $(1-\varepsilon)$ confidence recommendation set.

\begin{Example}
We consider the precedence statistics given in Example 3 for this example also.   
Let $O_1^{train} =
\{o_1,~ o_3,~ o_5\}\subset O_1$ and
$O_1^{candidates} = \{o_7, o_9\}$. Let $o_2$ be the candidate item for recommendation. We append $o_2$
with $O_1^{train}$ as $O_1^{train+} = \{o_1,~ o_3,~ o_5,~ o_2\}$.
Nonconformity of an item $o_h\in O_1^{train+}$ is measured by the recommendability of a candidate item $c\in O_1^{candidates}$ using the profile $O_1^{train+}\setminus \{o_h\}$.
For example, nonconformity of an item $o_1$ concerning the recommendability of $o_7$ is computed as
\begin{align*}
	\alpha_1 = Score^1(o_7, u_1) & = \frac{Support(o_7)}{30} \times PP(o_3\mid o_7)
	\times PP(o_5\mid o_7) \times PP(o_2\mid o_7) \\
	& =\frac{15}{30}\times\frac{5}{15}\times\frac{7}{15}\times\frac{7}{15}=0.036.
\end{align*}
Nonconformity score of $o_3$ is
\[
\alpha_3 = Score^3(o_7, u_1) = \frac{Support(o_7)}{30} \times PP(o_1\mid o_7)
\times PP(o_5\mid o_7) \times PP(o_2\mid o_7) = =0.043.
\]
Similarly, Nonconformity score of $o_5$ is  $\alpha_5 = Score^5(o_7, u_1) =
0.031$ and nonconformity score of $o_2$ is
$\alpha_2 = Score^2(o_7, u_1) = 0.031$. The p-value of $o_2$ concerning the recommendability of $o_7$ is computed as follows. 
\[
p(o_2, o_7) = \frac{\Big\lvert\big\{ o_h \big\vert  o_h\in O_1^{train+}, Score^h(o_7, u_1)\geq Score^2(o_7, u_1)\big\}\Big\rvert}{\lvert O_1^{train+}\rvert}
= \frac{4}{4} = 1,
\]
Similarly, we compute the p-value of $o_2$ concerning the recommendability of $o_9$ that is $p(o_2, o_9) = 0.75$. In order to get the final $p$-value from $p(o_2, o_7)$ and $p(o_2, o_9)$, \texttt{CRS-max}~\cite{kagita2017conformal} employs a \emph{maximum} strategy and \texttt{CRS-med}~\cite{kagita2017conformal} employs a \emph{median} strategy. Therefore, the final $p$-value according to \texttt{CRS-max} and \texttt{CRS-med} are 1 and 0.875 respectively. Similarly, we compute the $p$-value for all the candidate items for recommendation and recommend the items whose $p$-value is greater than $\varepsilon$ with the confidence of $(1-\varepsilon)$.

\end{Example}

\section{Inductive Conformal Recommender System}
\label{sec:ICRS}
This section presents the proposed \emph{inductive conformal recommender system (ICRS)} to gauge the confidence of recommendations. The proposed conformal approach determines a recommendation set $\Gamma^\varepsilon$ with  $(1-\varepsilon)$ confidence for a given significance level $\varepsilon$. 
A pivotal component of the conformal framework is the nonconformity measure quantifying the reliability in prediction. We use precedence relations among the items to determine the nonconformity score. Precedence relations capture the temporal patterns in user transactions. Besides, precedence relations based recommender systems do not require rating information, which is indeed challenging to obtain in a real-time scenario. Furthermore, these systems are ranking systems and thus allow us to define confidence for recommendation instead of a rating prediction. These are the various reasons for choosing precedence relations to represent nonconformity measures. 

The brief idea of the proposed approach is as follows. We split $O_{j}$ into \emph{proper training set} $O_j^t = \{o_1, o_2, \ldots, o_{m}\}$ and  \emph{calibration set} $O_j^c = \{o_{m+1}, o_{m+2}, \ldots, o_{m+l}\}$, wherein  $O_j^c$ is the set of items known to be consumed after $O_j^t$ and $n=m+l$. The idea is to compute the (non)conformity measure for every item in the calibration set along with a new item $o_{n+1}$ and determine $o_{n+1}$'s \emph{p-value}: the proportion of items having (non)conformity score better than or equal to that of a new item. Subsequently, we include item $o_{n+1}$ in the $\Gamma^\varepsilon$ recommendation region if the $p$-value of $o_{n+1}$ exceeds $\varepsilon$.  


The following subsections elaborate on the notions of (non)conformity measures and the p-value and describe the complete procedure. 
Subsection~\ref{sec:NCM} defines the various \linebreak (non)conformity measures to determine the conformity or strangeness of an object concerning the training set. Subsection~\ref{sec:pvalue} defines $p$-value, which quantifies the conformity score of a new item concerning the training set of items and defines the recommendation set $\Gamma^\varepsilon$ with $(1-\varepsilon)$ confidence.   Subsection~\ref{Algo_Flow} gives the flowchart of the proposed system and describes the proposed algorithm.  In Subsection~\ref{sec:validity}, we describe the two important measures of any conformal prediction framework, \textit{validity} and \textit{efficiency}, in the recommender systems setting. Finally, we proffer theoretical time complexity analysis of the proposed approach against the existing methods in Subsection~\ref{timeCAnalysis}.

\subsection{Nonconformity Measures}
\label{sec:NCM}
Nonconformity measure is a measurable function $\mathcal{A}$ that determines a new object's relation with the proper training set in terms of a scalar value. There are several ways traditional algorithms can construct nonconformity measures; each of these measures defines a unique ICRS. It is worth mentioning that a particular (non)conformity measure only affects the ICRS model's efficiency, and the validity of the results remains unaffected. We propose different conformity/nonconformity measures in this section and analyze the efficiency. 
We use \emph{precedence count} $PC(o_i, o_h)$ and  \emph{precedence probability} $PP(o_i|o_h)$ that determines the precedence relation among items to define various (non)conformity measures. When we compute these quantities for each item in the user profile, we get multiple values. We use different aggregation techniques as a design choice to calculate the (non)conformity value using multiple precedence statistics. For the simplicity of notations, we refer to conformity measure as \textit{CM} and nonconformity measure as \textit{NCM} in the subsequent discussion.


We adapt the score function proposed by Parameswaran et al.~\cite{ParameswaranKBG10} that estimates the relevance of an item to the user profile to establish the first conformity measure.  We define the conformity score of an item $o_h$ for a user $u_j$ profile as follows. 

\begin{center}
$
CM1(o_h) =  \frac{Sup(o_h)}{n_u} \times \underset{o_l \in
	O_j^t}{\operatorname{\prod^{(I)}}}PP(o_l|o_h), 
\label{eq:CM}    
$\end{center}
where $\prod^{(I)}$ denotes the multiplication of top-I quantities in the product term. We validate the algorithm for different I values and take $I$ as 1 in the experiments. The score is high when it conforms more to the training set. Note that every measure that we define here is with respect to a target user $u_j$.
%
Furthermore, we determine an object's conformity in terms of the precedence count of an item with the set of items consumed by the user. The precedence count ($PC(o_i, o_h)$) defined previously represents the number of times an item $o_h$ appeared after $o_i$ in user profiles. The higher the number, the more likely it is that $o_h$ appears after $o_i$. Hence, we use precedence count to determine a conformity measure. We compute the precedence count of an item $o_h$ to	every item $o_i$ in the proper training set $O_j^t$ of user $u_j$ and then aggregate them to get a numerical score. 
Using the different aggregation strategies such as \emph{minimum}, \emph{median}, \emph{mean} and \emph{maximum}, we arrive at the following conformity measures: \emph{CM2, CM3, CM4}, and \emph{CM5}, respectively. The detailed formulation of these measures is given in Annexure 1. 
We also use the precedence probability of an object with respect to the user profile to determine the conformity score of an object. 
%
Precedence probability $PP(o_h\mid o_i)$ of an item $o_h$ with respect to an item $o_i$ indicates how likely an item $o_h$ follows an	item $o_i$. Hence, we use precedence probabilities of an item	$o_h$ with respect to individual items in the user profile to define the conformity measures. 
%
%
We again use different aggregation strategies 
to summarize the precedence probability scores of $o_h$ with respect to multiple items in the user profile. The process resulted in four different conformity scores, \emph{CM6 (minimum), CM7 (median), CM8(mean)}, and \emph{CM9(maximum)} with the corresponding aggregation operator mentioned in the brackets. The detailed formulation is given in Annexure 1. 
%
%
We then employ probability of $o_h$ given that $o_i$ is present in the target user profile without preceding $o_h$ to determine the conformity score of an item $o_h$ concerning the training data. We compute this score with respect to each and every item in the training data and employ different aggregation strategies resulting in four different conformity scores, \emph{CM10 (minimum), CM11 (median), CM12(mean)}, and \emph{CM13(maximum).}
Finally, we consider the probability that an item $o_i$ appears in the profile without succeeding an item $o_h$ $(\frac{Sup(o_i) - PC(o_i, o_h)}{n_u})$ as the potential nonconformity measure for an item $o_h$. Since there are multiple $o_i$'s in the user profile/training set, we use different aggregation strategies and define the nonconformity measures \emph{NCM14, NCM15, NCM16}, and \emph{NCM17} as given in Annexure 1.

\begin{Lemma}
\label{lem:1}
(Non)conformity of items $\{o_{m+1}, \ldots, o_{n+1}\}$ is invariant of
permutation, i.e., for any permutation $\pi$ of $\{m+1, \ldots, n+1\}$ i.e.,~$\mathcal{A}(o_{m+1}, o_{m+2}, \ldots, o_{n+1}) = (\alpha_{m+1}, \alpha_{m+2}, \ldots,  \alpha_{n+1}) \linebreak \Rightarrow  \mathcal{A}(o_{\pi(m+1)}, o_{\pi(m+2)}, \ldots, o_{\pi(n+1)}) = (\alpha_{\pi(m+1)}, \ldots, \alpha_{\pi(n+1)}).$
\end{Lemma}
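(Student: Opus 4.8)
The plan is to reduce the statement to the single structural observation that each (non)conformity measure $\mathcal{A}$ introduced in Subsection~\ref{sec:NCM} is, component-wise, a function of \emph{one} item together with the fixed proper training set $O_j^t$ only — in particular it never refers to the position index of an item inside the calibration block, nor to any of the other items of $\{o_{m+1},\dots,o_{n+1}\}$. Once this is established, the asserted permutation behaviour is a one-line relabelling argument with no probabilistic content; it is purely a symmetry (equivariance) property of the measure.

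First I would make the structural claim explicit. Inspecting $CM1$ and, via Annexure~1, each of $CM2$--$CM13$ and $NCM14$--$NCM17$, every proposed measure has the common shape
\[
\alpha_h \;=\; f\bigl(o_h \,;\, O_j^t\bigr)
\;=\; g\Bigl(\,\bigl\{\,\phi(o_l, o_h) : o_l \in O_j^t\,\bigr\},\; Sup(o_h)\Bigr),
\]
where $\phi(\cdot,\cdot)$ is one of the global precedence statistics (a value of $PP(o_l\mid o_h)$, $PC(o_l,o_h)$, $\tfrac{Sup(o_l)-PC(o_l,o_h)}{n_u}$, and so on), $g$ is one of the fixed aggregation operators (top-$I$ product, minimum, median, mean, maximum), and both $Sup$ and $\phi$ are computed once from the complete collection of user profiles and are therefore constants of the problem. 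In no case does the evaluation of $\alpha_h$ use the index $h$, and in no case does it use any item of $\{o_{m+1},\dots,o_{n+1}\}$ other than $o_h$ itself: the reference set of the aggregation is always the proper training set $O_j^t$, which by the inductive split is fixed before this step and is disjoint from the calibration-plus-test block.

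Next I would conclude. Fix a permutation $\pi$ of $\{m+1,\dots,n+1\}$ and assume $\mathcal{A}(o_{m+1},\dots,o_{n+1})=(\alpha_{m+1},\dots,\alpha_{n+1})$, i.e.\ $\alpha_h=f(o_h;O_j^t)$ for each $h$. Evaluating $\mathcal{A}$ on the reordered tuple $(o_{\pi(m+1)},\dots,o_{\pi(n+1)})$, its $k$-th entry is by definition $f(o_{\pi(k)};O_j^t)$, and since $f$ depends on nothing beyond its displayed arguments this equals $\alpha_{\pi(k)}$. Hence $\mathcal{A}(o_{\pi(m+1)},\dots,o_{\pi(n+1)})=(\alpha_{\pi(m+1)},\dots,\alpha_{\pi(n+1)})$, which is exactly the claim.

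The only point needing genuine care — the ``hard part'', such as it is — is verifying the structural claim uniformly across all seventeen measures: one must check that none of the aggregation operators smuggles in an order dependence (e.g.\ that the ``median'' and ``mean'' over $O_j^t$ are genuinely set operations) and that the precedence quantities are always read off the precomputed global tables rather than recomputed from a re-ordered target profile, so that permuting the bookkeeping of the calibration items leaves the underlying statistics untouched. Granting this, the lemma follows at once, and it is precisely the exchangeability-type property that the subsequent validity argument will invoke.
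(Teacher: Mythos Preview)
Your proposal is correct and follows essentially the same approach as the paper: both arguments rest on the single observation that every proposed (non)conformity score $\alpha_h$ depends only on $o_h$ and the fixed proper training set $O_j^t$, never on the other calibration items or on the index $h$, so permuting the calibration block merely relabels the outputs. Your version is more carefully spelled out (explicit functional form, explicit check across all seventeen measures, explicit equivariance step), but the underlying idea is identical to the paper's brief proof.
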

\begin{proof}
It is easy to see that the nonconformity scores are invariant of permutation $\pi$ of $\{o_{m+1}, \ldots, o_{n+1}\}$. 
All the proposed conformity/nonconformity measures are independent of the calibration set $\{o_{m+1}, \ldots, o_{n+1}\}$ and only makes use of the proper training set. Hence changing the permutation of a calibration set does not effect the nonconformity scores and remains the same. Therefore the proposed (non)conformity scores are invariant of permutation of $\{o_{m+1}, \ldots, o_{n+1}\}$. 
\end{proof}

\begin{algorithm}[h]
\SetAlgoLined
\KwIn{$O, ~ target~ user~ u_j, ~ O_j, ~ \varepsilon$}
\KwOut{Recommendation set ($\Gamma^{\varepsilon}$)}

split $O_j$ into two sets $O^{t}_{j}$ and
$O^{c}_{j}$\;
$\Gamma^{\varepsilon} \leftarrow \emptyset $ \;

\For{each $o_h$ in $O^c_j$} {
	Compute $\alpha_h$ using any of the (non)conformity measures\;
}
\For {each $o$ $\in O\setminus O_j$} {
	Compute (non)conformity score of an item $o$\;
	Compute $p(o)$ using Equation \ref{eq-p1} or  \ref{eq-p2}\;
	
	\lIf { $p(o) > \varepsilon$ } {
		$\Gamma^{\varepsilon} \leftarrow \Gamma^{\varepsilon} \cup \{o\}$
	}
}
\caption{Inductive Conformal Recommender Systems.}
\label{algo:ICR}
\end{algorithm}

\subsection{p-value and Recommendation Set}
\label{sec:pvalue}
Let $\alpha_h$ be the conformity or nonconformity value of an item $o_h$. For nonconformity measures, the proportion of examples having a nonconformity value greater than the new example defines the $p$-value,  
\begin{equation}
p(o_{n+1}) = \frac{\Big\vert\big\{ h \big\vert  m+1\le h \le n+1,
	\alpha_h \ge  \alpha_{n+1}\big\}\Big\vert}{l+1}.
\label{eq-p1}
\end{equation}
In the case of conformity measure, we define it as the proportion of examples having conformity value less than the new example,
\begin{equation}
p(o_{n+1}) = \frac{\Big\vert\big\{ h \big\vert  m+1\le h \le n+1,
	\alpha_h \le  \alpha_{n+1}\big\}\Big\vert}{l+1}.
\label{eq-p2}
\end{equation}
For a target user $u_j$, the recommendation set is then constructed by computing the $p$-value for every unused item. All the items whose $p$-value is 	greater than the predetermined significance level $\varepsilon$ will form a recommendation region $\Gamma^\varepsilon$.
\begin{equation*}
\Gamma^\varepsilon = \{o\mid p(o)> \varepsilon\}.
\end{equation*}

\subsection{Algorithm } 
\label{Algo_Flow}
In this section, we describe the algorithm by using the concepts defined in the previous sections. Algorithm~\ref{algo:ICR} outlines the main flow of the proposed method. At first, we divide the dataset into a proper training set and calibration set. Next, we compute every item's nonconformity value in the calibration set and for every candidate item. We then compute the $p$-value for every candidate item and determine the recommendation set. The flowchart of the proposed algorithm is shown in Figure~\ref{icrs_flow}. 
\begin{figure}[ht!]
\adjustbox{max width=\linewidth}{
	\includegraphics[scale=1]{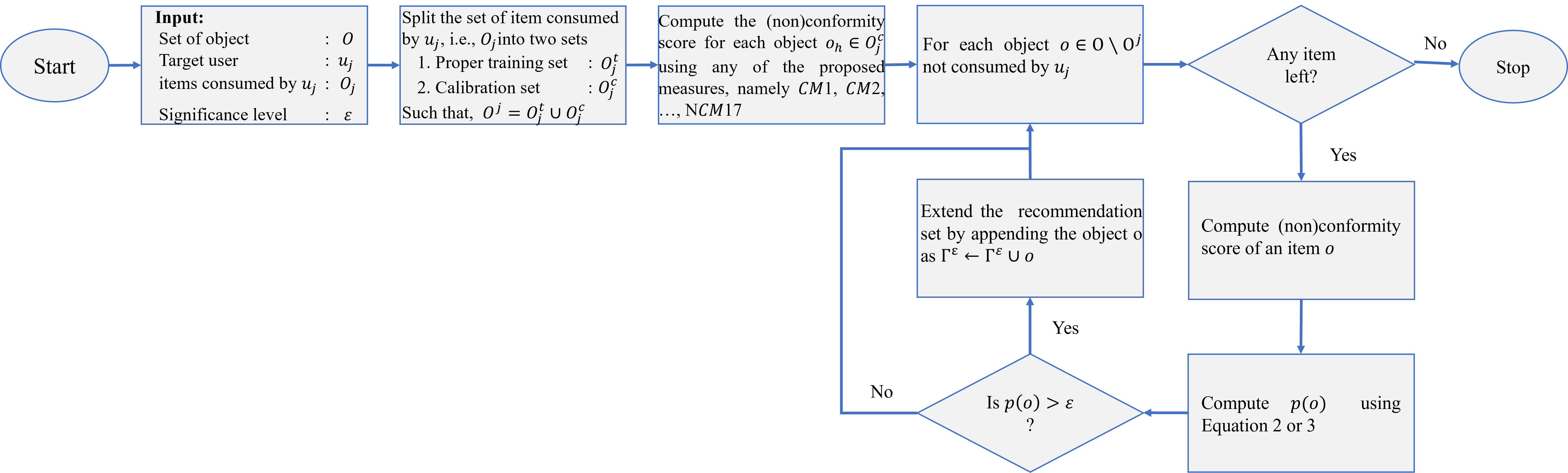}
}
\caption{Inductive Conformal Recommender System.} 
\label{icrs_flow}
\end{figure}
\begin{Example}
We consider the precedence statistics given in Example 3.  
We divide the target user $u_1$ profile into $O_1^t = \{o_1, o_3, o_5\}$ and $O_1^c = \{o_7, o_9\}$. Let us compute the nonconformity values, and $p$-value with respect to $o_2$ and the conformity measure \texttt{CM1}. The procedure is similar for other (non)conformity measures also. 

\[
CM1(o_7)  = \frac{Support(o_7)}{30} \times top1\big(PP(o_1\mid o_7), PP(o_3\mid o_7), PP(o_5\mid o_7)\big) \\
= \frac{15}{30}\times\frac{7}{15} = 0.23.
\]
Similarly, $CM1(o_9)= 0.3$ and $CM1(o_2) = 0.3$. Hence,
\[
p(o_2) = \frac{\big\lvert\{o_h\mid o_h\in \big\{O_1^c\cup\{o_2\}\big\}\wedge CM1(o_h)\le CM1(o_2) \}\big\rvert }{\lvert O_1^c\cup\{o_2\}\rvert} = \frac{3}{3} = 1.
\] 
\noindent We can compute the $p$-value for all the other candidate items and include those items whose $p$-value is greater than $\varepsilon$ in the recommendation set.
\end{Example}

\subsection{Validity and Efficiency}
\label{sec:validity}
We have already shown that the (non)conformity measures defined above satisfy	the invariant property in Lemma~\ref{lem:1}. Hence, following the line of argument given by Vovk et al.~\cite{vovk3}, it is easy to see that our proposed method ICRS satisfies the validity property.
\begin{Lemma}
\label{lem1}
If objects $o_{m+1}, o_{m+2}, \ldots, o_{n+1}$ are independently and identically distributed (i.i.d.) in terms of their precedence relations with individual items in the history, then the probability of error that $o_{n+1}\notin \Gamma^\varepsilon(o_1, o_2, \ldots, o_m)$
will not exceed $\varepsilon \in [0,1]$ i.e., $P(P(o_{n+1})\le \varepsilon)\le \varepsilon.$
\end{Lemma}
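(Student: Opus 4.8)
The plan is to reduce the claim to the standard validity theorem for inductive conformal predictors (Vovk et al.), exactly as hinted in the paragraph preceding Lemma~\ref{lem1}. The key point to extract is that the $p$-value defined in Equation~\eqref{eq-p1} (or \eqref{eq-p2}) is a ``smoothed-free'' conformal $p$-value computed from a bag of $l+1$ values $(\alpha_{m+1},\ldots,\alpha_{n+1})$, and that under the i.i.d.\ hypothesis these $l+1$ values are exchangeable. Once exchangeability of the $\alpha$'s is in hand, the error probability bound is a one-line order-statistics argument. So the proof has two parts: (i) establish exchangeability of $(\alpha_{m+1},\ldots,\alpha_{n+1})$, and (ii) run the classical rank argument.

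First I would argue exchangeability. By Lemma~\ref{lem:1}, every proposed (non)conformity measure depends only on the proper training set $O_j^t=\{o_1,\ldots,o_m\}$ and on the single item being scored; it does not look at the other calibration items at all. Hence $\alpha_h = \mathcal{A}(o_h)$ is obtained by applying one fixed measurable function (determined entirely by the frozen $O_j^t$) to $o_h$ individually, for each $h\in\{m+1,\ldots,n+1\}$. Since $o_{m+1},\ldots,o_{n+1}$ are assumed i.i.d.\ (in terms of their precedence relations with the history), applying the same fixed function coordinatewise preserves the joint distribution under permutations, so $(\alpha_{m+1},\ldots,\alpha_{n+1})$ is an exchangeable sequence of random variables.

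Next I would carry out the rank argument. Conditioning on the multiset $\{\alpha_{m+1},\ldots,\alpha_{n+1}\}$, exchangeability makes each of the $l+1$ positions equally likely to hold the value $\alpha_{n+1}$. The event $P(o_{n+1})\le\varepsilon$ is, by \eqref{eq-p1}, the event that the number of indices $h$ with $\alpha_h\ge\alpha_{n+1}$ is at most $\lfloor (l+1)\varepsilon\rfloor$; equivalently, $\alpha_{n+1}$ is among the $\lfloor (l+1)\varepsilon\rfloor$ largest of the $l+1$ values (breaking ties in the conservative direction). The conditional probability of this is at most $\lfloor (l+1)\varepsilon\rfloor/(l+1)\le\varepsilon$; taking expectation over the multiset gives $P(P(o_{n+1})\le\varepsilon)\le\varepsilon$. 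The conformity-measure case with \eqref{eq-p2} is identical after reversing the order. Finally, $o_{n+1}\notin\Gamma^\varepsilon$ means precisely $p(o_{n+1})\le\varepsilon$ (since $\Gamma^\varepsilon=\{o\mid p(o)>\varepsilon\}$), so the displayed inequality is exactly the error bound claimed.

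The main obstacle is conceptual rather than computational: one must be careful that the i.i.d./exchangeability assumption is the right one and is actually used only where legitimate. In particular, the proper training items $o_1,\ldots,o_m$ are \emph{not} assumed exchangeable with the calibration items here — and they need not be, because the (non)conformity measures never permute training with calibration items; the measure is frozen once $O_j^t$ is fixed, and all randomness that matters lives in the calibration-plus-test block. Making this split explicit (conditioning on $O_j^t$, then invoking Lemma~\ref{lem:1} to see the scoring function is fixed, then invoking exchangeability of the remaining block) is the crux; the tie-breaking convention in \eqref{eq-p1} must be the conservative one for the bound to hold without smoothing, and I would state that explicitly rather than gloss over it.
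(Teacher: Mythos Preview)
Your proposal is correct and follows essentially the same route as the paper's own proof: identify the error event with $\alpha_{n+1}$ lying among the $\lfloor\varepsilon(l+1)\rfloor$ largest of $\{\alpha_{m+1},\ldots,\alpha_{n+1}\}$, argue that the i.i.d.\ assumption on the calibration-plus-test block makes all permutations of these scores equiprobable, and conclude the bound by the rank argument. Your write-up is more careful than the paper's in that you explicitly invoke Lemma~\ref{lem:1} to justify why the scoring function is frozen once $O_j^t$ is fixed, and you spell out the conditioning and tie-handling, but the underlying argument is the same.
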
%
\begin{proof}
An error occurs when $P(o_{n+1})\le \varepsilon$. 
That is, when $\alpha_{n+1}$  is among the $\lfloor \varepsilon(l+1)\rfloor$ largest elements of the set $\{\alpha_{m+1}, \alpha_{m+2}, \ldots, \alpha_{n+1}\}$. When all the objects are in i.i.d in terms of precedence relations with the set of items consumed by an user, all permutations of the set $\{\alpha_{m+1}, \ldots, \alpha_{n+1}\}$ are equiprobable. 
Thus, the probability that $\alpha_{n+1}$ is among the $\lfloor\varepsilon(l+1)\rfloor$ largest elements does not exceed $\varepsilon$, which is therefore the probability of error.
\end{proof}
In addition to satisfying the validity property, it is desirable to have an efficient recommendation set. In the conformal framework setting, a narrow set with higher confidence is more efficient. 	We empirically analyze the validity and efficiency properties in	Section~\ref{sec:Exp}.

\subsection{Time Complexity Analysis}
\label{timeCAnalysis}
In this section, we analyze the time complexity of the proposed method against transductive conformal recommender systems~\cite{kagita2017conformal} and the underlying precedence mining based algorithm~\cite{ParameswaranKBG10}. For simplicity, we assume that the 	calibration set size is the same as that of candidate-set ($\lvert O^{candidates}_j\rvert$) in the Conformal Recommender System~\cite{kagita2017conformal}.  We know that $m$ is the size of the proper training set, and $l$ is the calibration set size. 	Let $n_c$ be the number of candidate items i.e., $n_c = n_i - n$. 	Since the complexity of measuring nonconformity scores varies from measure to measure, we assume it to be $O(t)$. With $O(t)$ as the complexity of nonconformity measure, the inductive conformal predictor takes $O((l + n_c)t)$ time 	complexity to determine all the required $p$-values and make recommendations. On the other hand, transductive conformal recommender systems take $O(n_c lmt)$ complexity with $O(t)$ as the nonconformity measure's complexity. Kagita et al.~\cite{kagita2017conformal} reduce it to $O(n_c lm)$ using the relation between the score and precedence probability, but it is higher than the inductive conformal recommender system. The complexity of the precedence mining based recommender system is $O(n_cn)$.

\begin{table}[ht!]
\centering
\caption{Summary of experimental datasets.}
\renewcommand{\arraystretch}{1.2}
\label{tab:dataset}
\begin{tabular}{|l||l|l|l|}
	\hline
	{\bf Dataset} & {\bf Users} & {\bf Items}  & {\bf Records}\\ \hline
	Personality-2018 & 1820  & 35196 & 1,028,752\\ \hline
	Flixsters & 20,618 & 28,331 &  1,048,575\\ \hline
	MovieLens 10M & 71,567 & 10,681& 10,000,054\\ \hline
	MovieLens 20M & 138,494 & 26,745 & 20,000,262 \\ \hline
	MovieLens 25M & 162,000 & 62,000 & 25,000,096 \\ \hline
	MovieLens-Latest-V1 & 229,061 & 26,780 &  21,063,128\\ \hline
	MovieLens-Latest-V2 & 280,000 & 58,000 &  27,753, 445\\ \hline
\end{tabular}
\end{table}

\section{Empirical Study}
\label{sec:Exp}
In this section, we empirically evaluate the efficacy of proposed \emph{Inductive Conformal Recommender System (ICRS)}. We provide an in-depth quantitative evaluation with regard to the prediction accuracy and running time on seven real-world datasets of varying size. The characteristics of these datasets are reported in Table~\ref{tab:dataset}. 
In all our experiments, we converted the multi-class (different ratings) datasets into one class by setting a threshold to $0$. The prediction accuracy of the comparing algorithms are evaluated based on the ranking-based performance metrics that is \emph{Average Precision (AP),  Area Under Curve (AUC), Normalized Discounted Cumulative Gain (NDCG)} and \emph{Reverse Reciprocal (RR)}. We also evaluated the performance based on top-K recommendation metrics that is \emph{Precion@K, Recall@K} and \emph{F1@K}~\cite{Huayu2016}. 
We compared our proposed method ICRS with the underlying Precedence Mining Model~\cite{ParameswaranKBG10} and the Conformal Recommender Systems (CRS-max and CRS-med)~\cite{kagita2017conformal}. In ICRS, to fine-tune the values of parameters $n$ and $k$, we experimented with different combinations and selected $n$ to be $30\%$ of the profile and $k$ to be $30\%$ and the remaining $40\%$ is the test data.  All the results reported here are the average of $500$ randomly selected instances. We use a notation $ICRS<x>$ to denote an inductive conformal recommender system that uses (non)conformity measure $x$. For example, $ICRS1$ uses conformity measure $1$ (CM1).

The remainder of the section is structured as follows. In Section~\ref{Validity_Efficiency}, we report the experimental evaluation of the validity and efficiency of the proposed methods. Section~\ref{ComparativeAnalysis} report comparative experimental results in terms of \textit{ranking-based metrics, top-k recommendation metrics}  and \textit{execution time}.



\subsection{Validity and Efficiency}
\label{Validity_Efficiency}
This subsection empirically evaluates the validity and efficiency of the proposed approach. We adapt the definitions of validity and efficiency given by Kagita et al.~\cite{kagita2017conformal}.  Figure~\ref{fig:ICRS_err1} and Figure~\ref{fig:ICRS_err2} shows the validity  and efficiency of the proposed approach respectively, over seven different datasets. We report the validity and efficiency related to \emph{ICRS1, ICRS3, ICRS7, ICRS11} and \emph{ICRS15}\footnote{ICRS3, ICRS7, ICRS11, and ICRS15 use the median strategy. Similar results have been observed for other strategies also.}.  It can be seen from Figure~\ref{fig:ICRS_err1} that the error is proportional to $\varepsilon$ and in the relative bound of $\varepsilon$. 
%
Figure~\ref{fig:ICRS_err2} reports the error related to efficiency. It can be seen from the figures that even for smaller values of $\varepsilon$, most of the irrelevant items are filtered out hence, resulting in a small error.  We also observed that, for higher values of $\varepsilon$, the recommendation set is more informative for all the strategies. 

\begin{figure}[h!]
\centering
\begin{subfigure}{0.24\linewidth}
	\centering
	\includegraphics[width=\textwidth, height = 3.1cm]{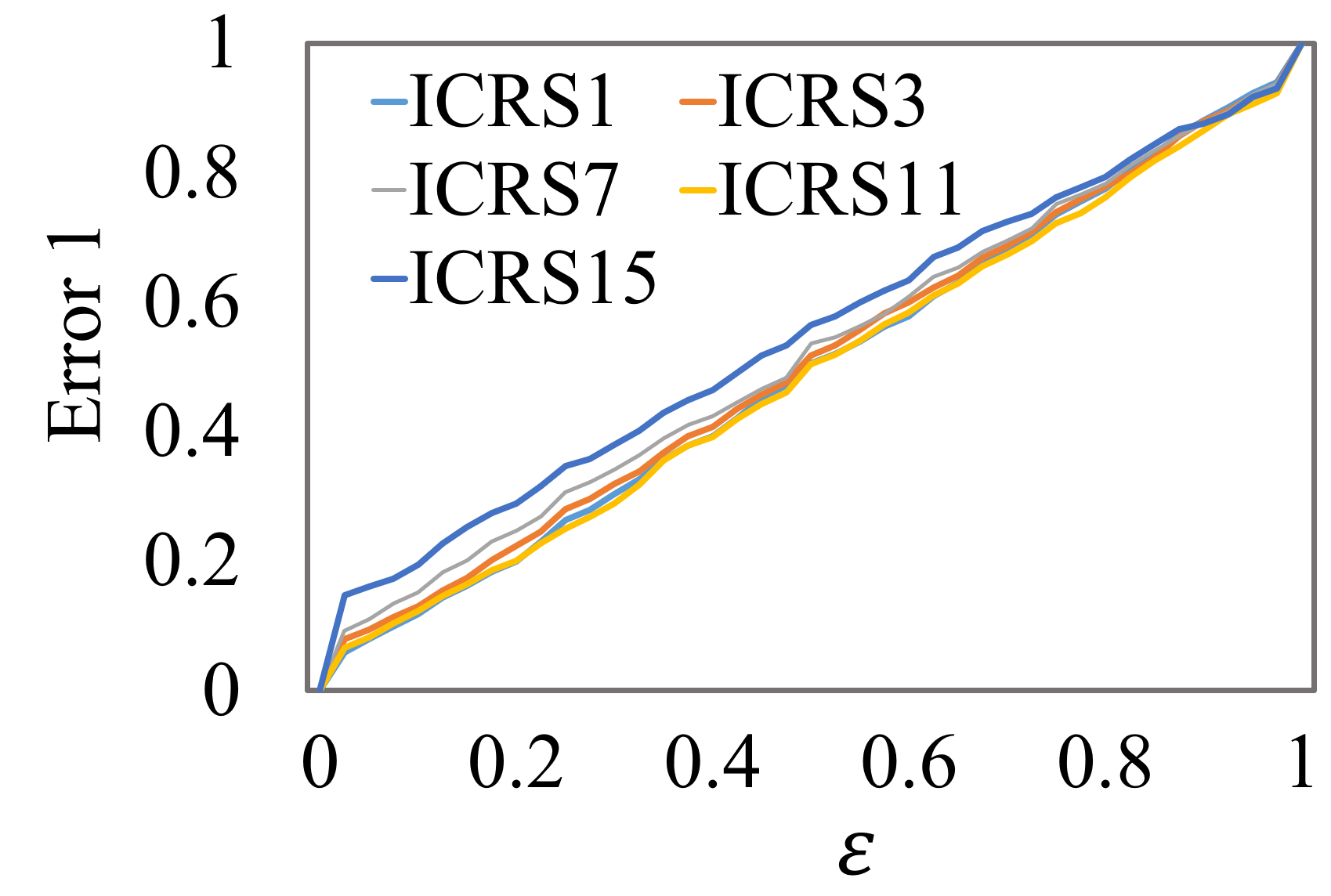}
	\caption{Personality-2018}
\end{subfigure}
\begin{subfigure}{0.24\linewidth}
	\centering
	\includegraphics[width=\textwidth, height = 3.1cm]{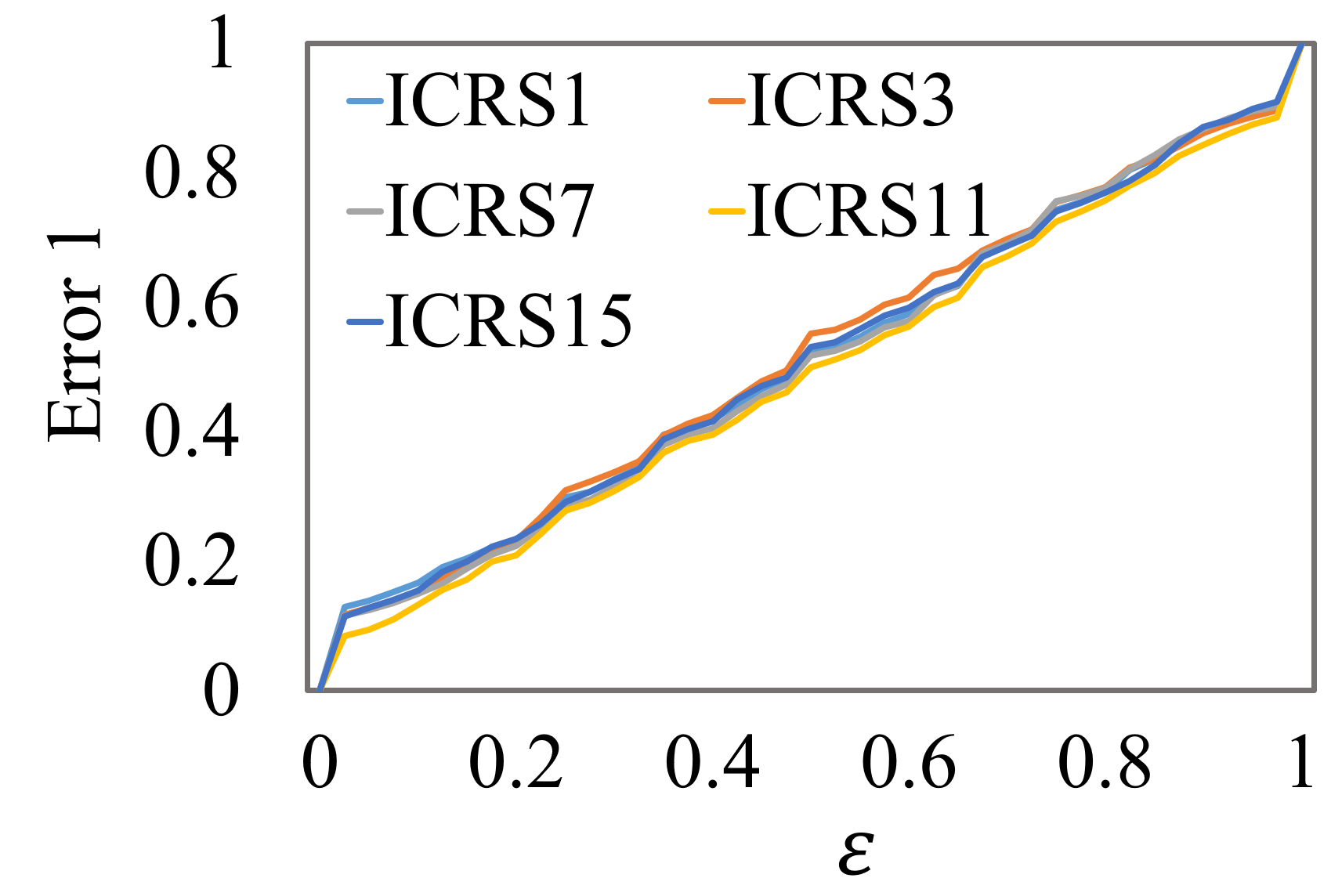}
	\caption{Flixsters}
\end{subfigure}
\begin{subfigure}{0.24\linewidth}
	\centering
	\includegraphics[width=\textwidth, height = 3.1cm]{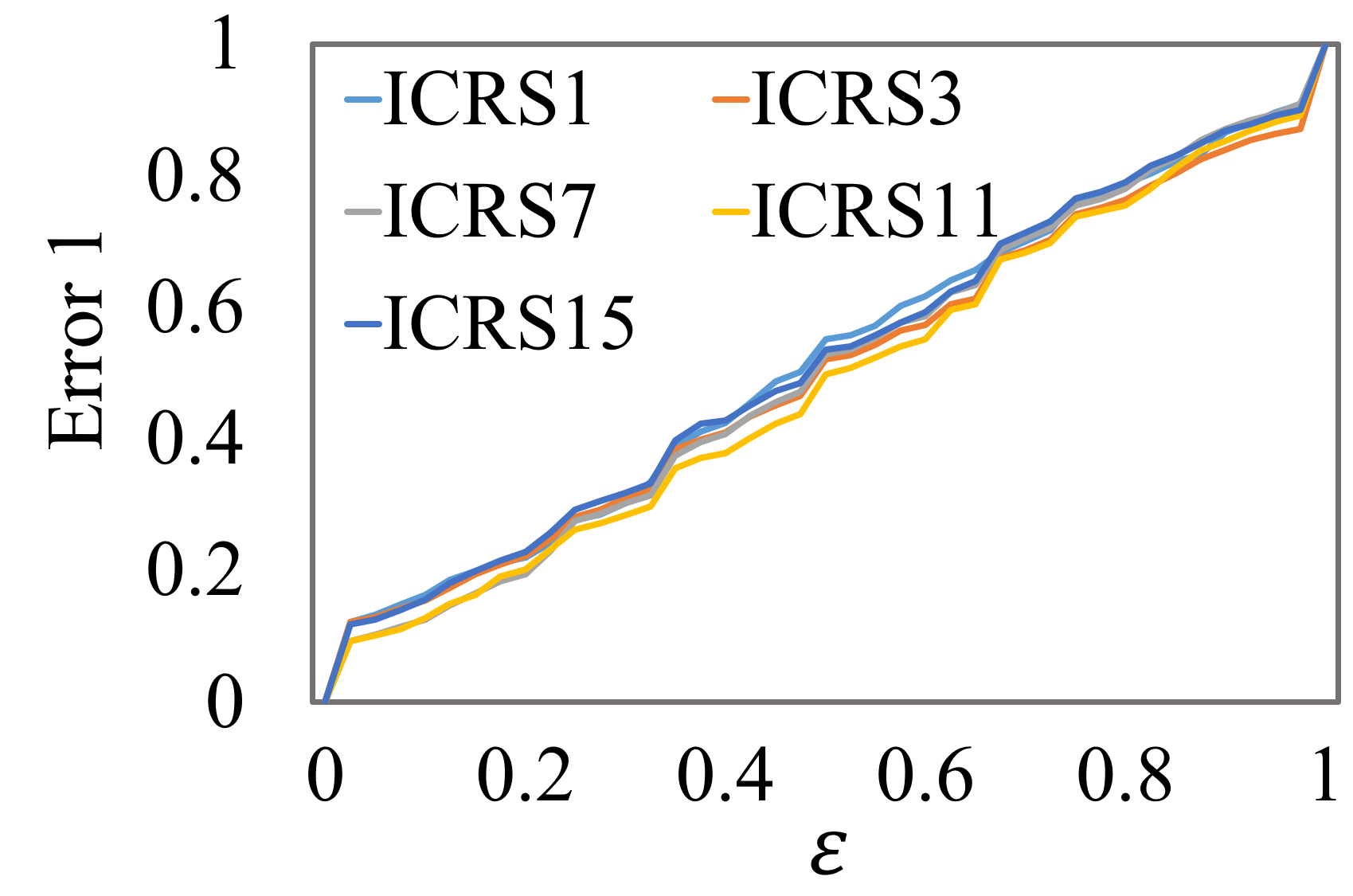}
	\caption{MovieLens 10M}
\end{subfigure}
\begin{subfigure}{0.24\linewidth}
	\centering
	\includegraphics[width=\textwidth, height = 3.1cm]{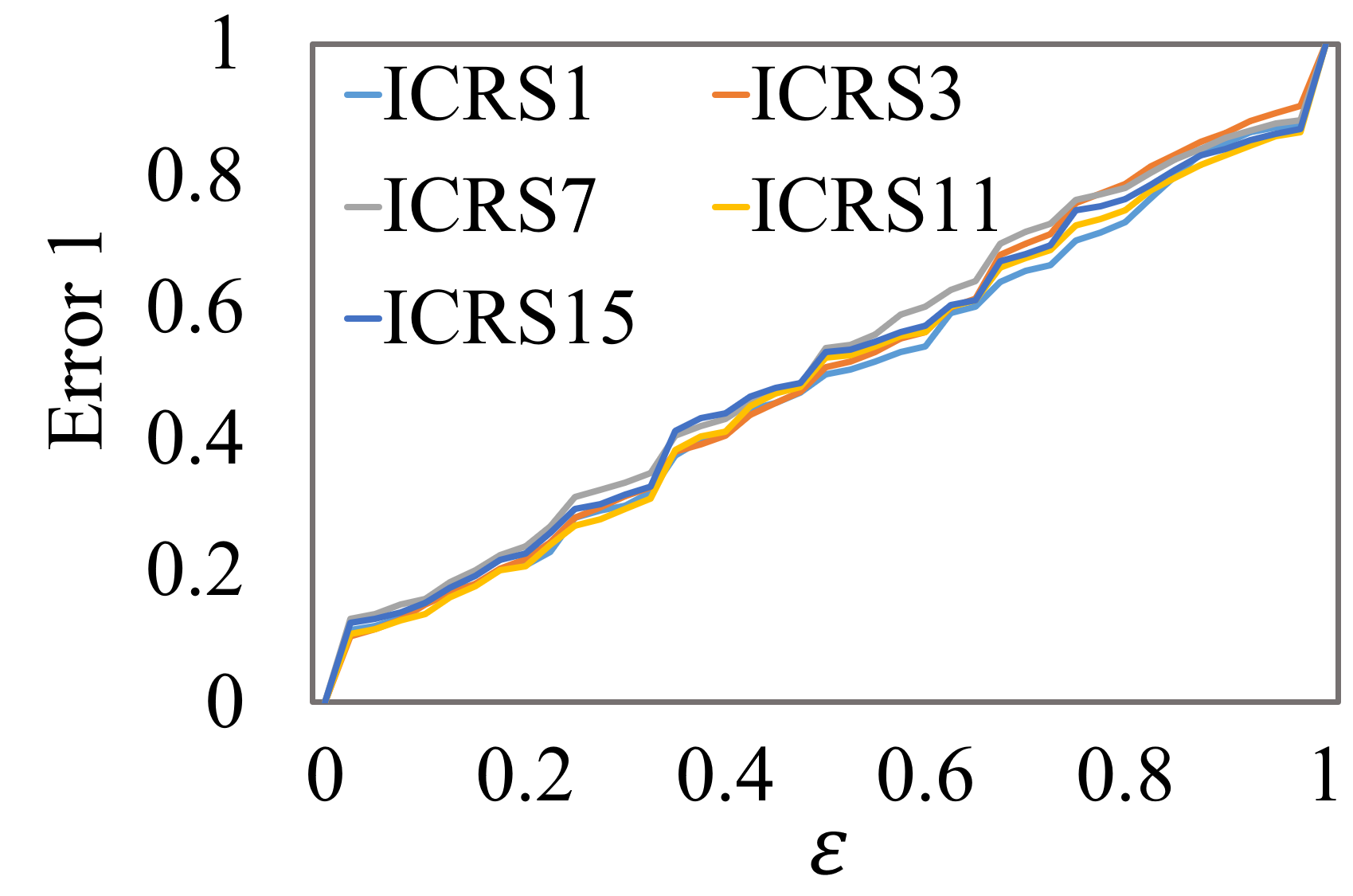}
	\caption{MovieLens 20M}
\end{subfigure}\\
\vspace{0.2cm}
\begin{subfigure}{0.24\linewidth}
	\centering
	\includegraphics[width=\textwidth, height = 3.1cm]{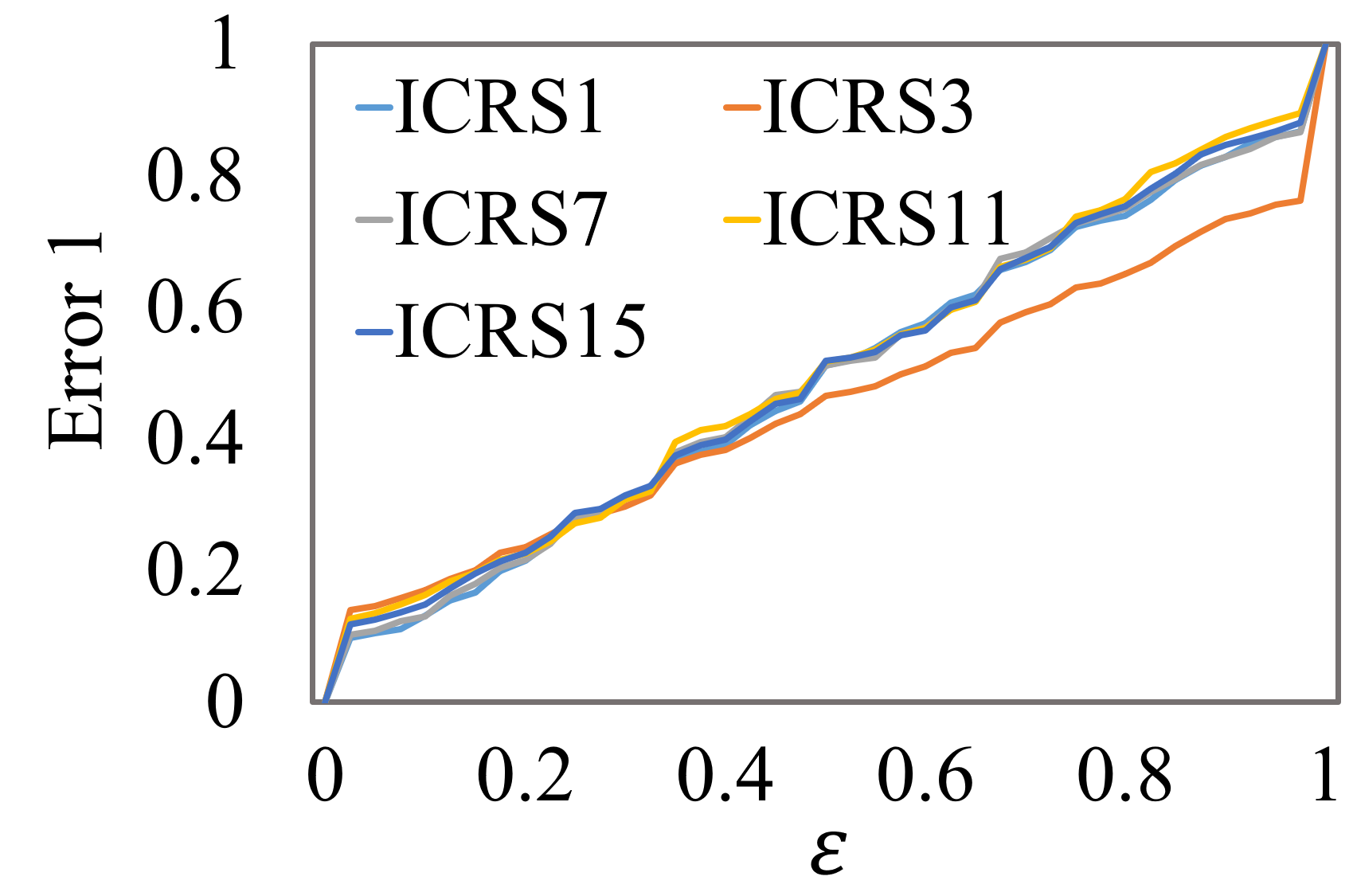}
	\caption{MovieLens 25M}
\end{subfigure}
\begin{subfigure}{0.24\linewidth}
	\centering
	\includegraphics[width=\textwidth, height = 3.1cm]{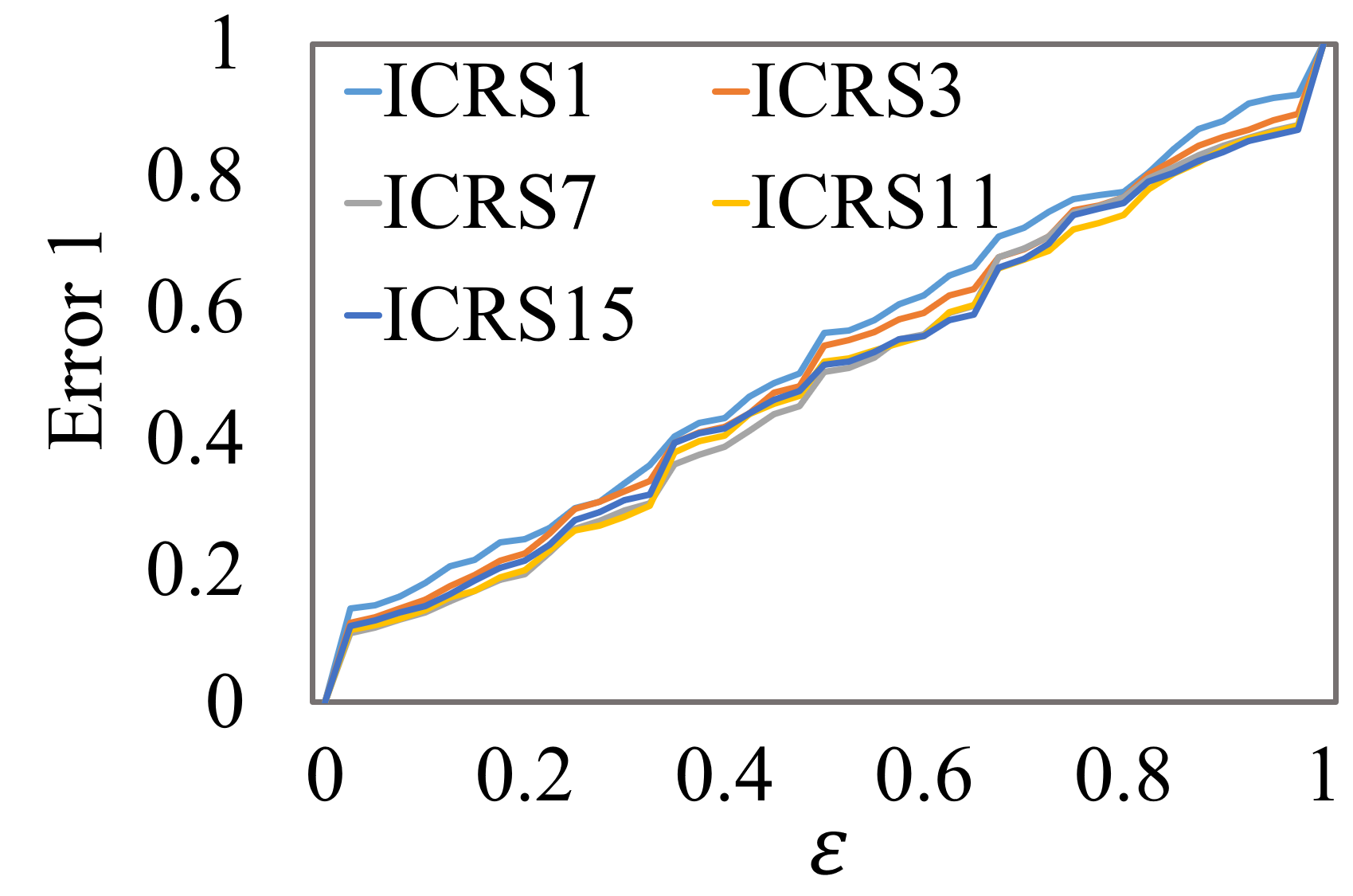}
	\caption{MovieLens-latest-V1}
\end{subfigure}
\begin{subfigure}{0.24\linewidth}
	\centering
	\includegraphics[width=\textwidth, height = 3.1cm]{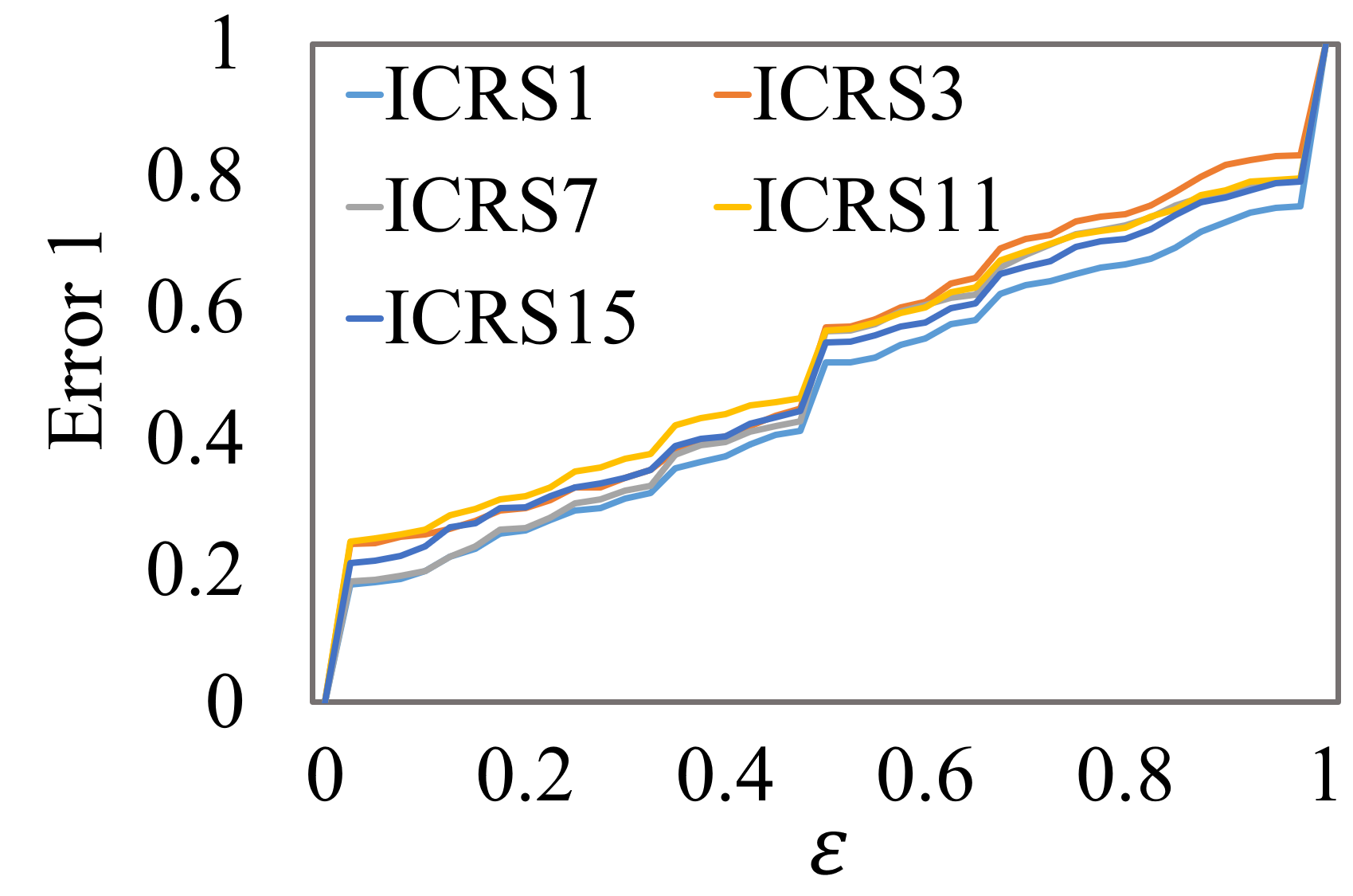}
	\caption{MovieLens-latest-V2}
\end{subfigure}
\hspace{0.24\linewidth}
\caption{Evaluation of recommendation validity for different datasets}
\label{fig:ICRS_err1}
\end{figure} 
\begin{figure*}[ht!]
\centering
\begin{subfigure}[c]{0.24\linewidth}
	\centering
	\includegraphics[width=\textwidth, height = 3.1cm]{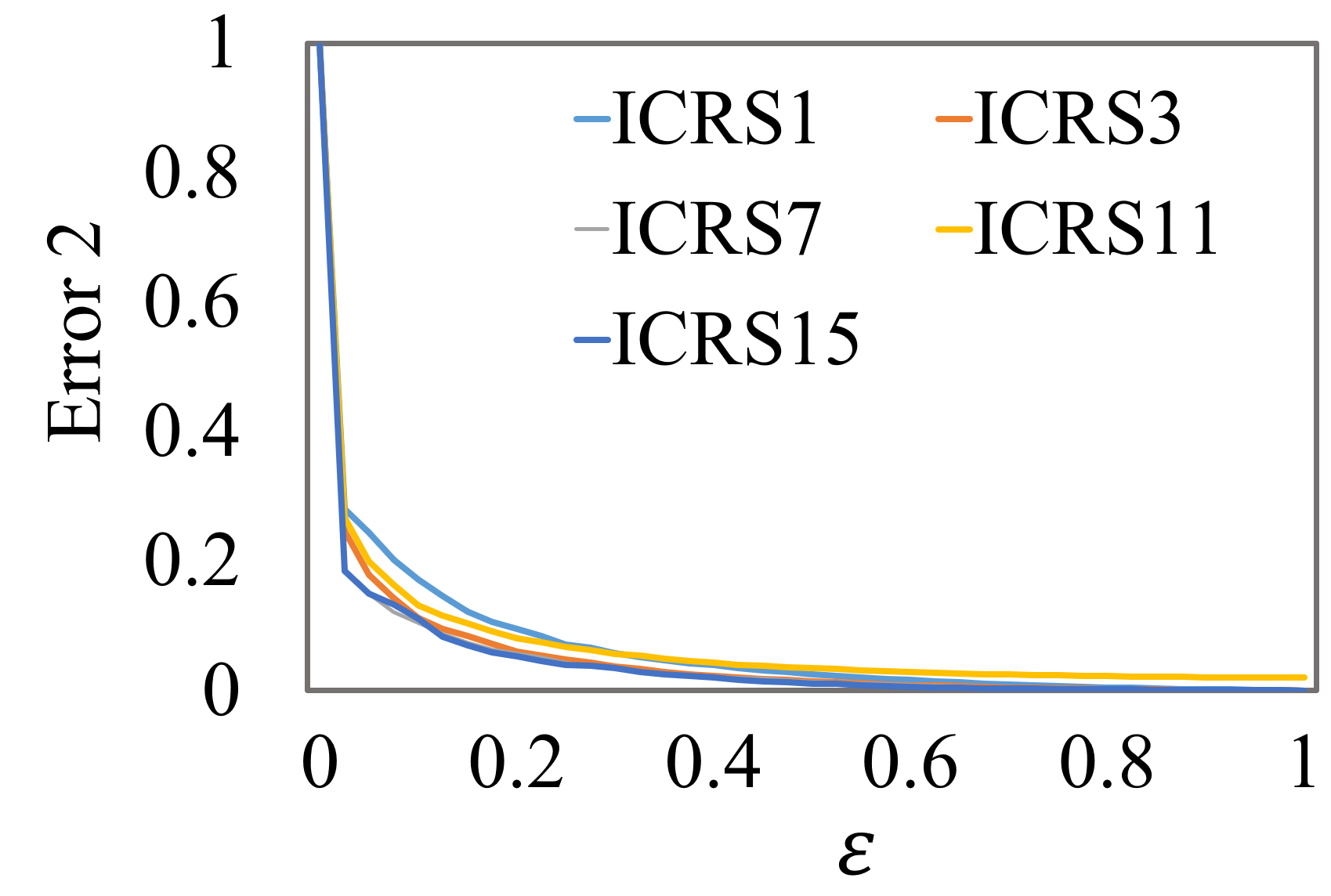}
	\caption{Personality-2018}
\end{subfigure}
\begin{subfigure}[c]{0.24\linewidth}
	\centering
	\includegraphics[width=\textwidth, height = 3.1cm]{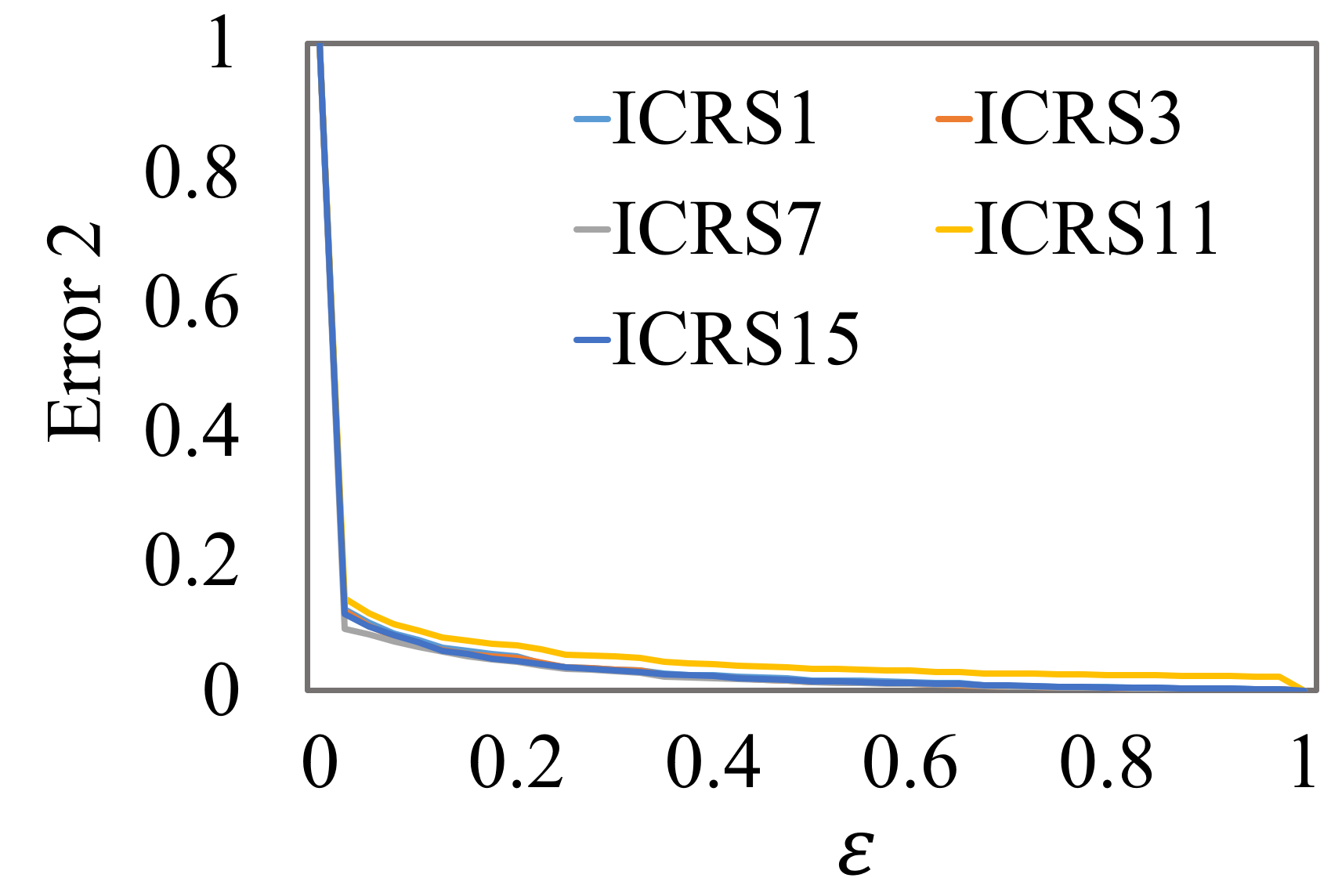}
	\caption{Flixsters}
\end{subfigure}
\begin{subfigure}[c]{0.24\linewidth}
	\centering
	\includegraphics[width=\textwidth, height = 3.1cm]{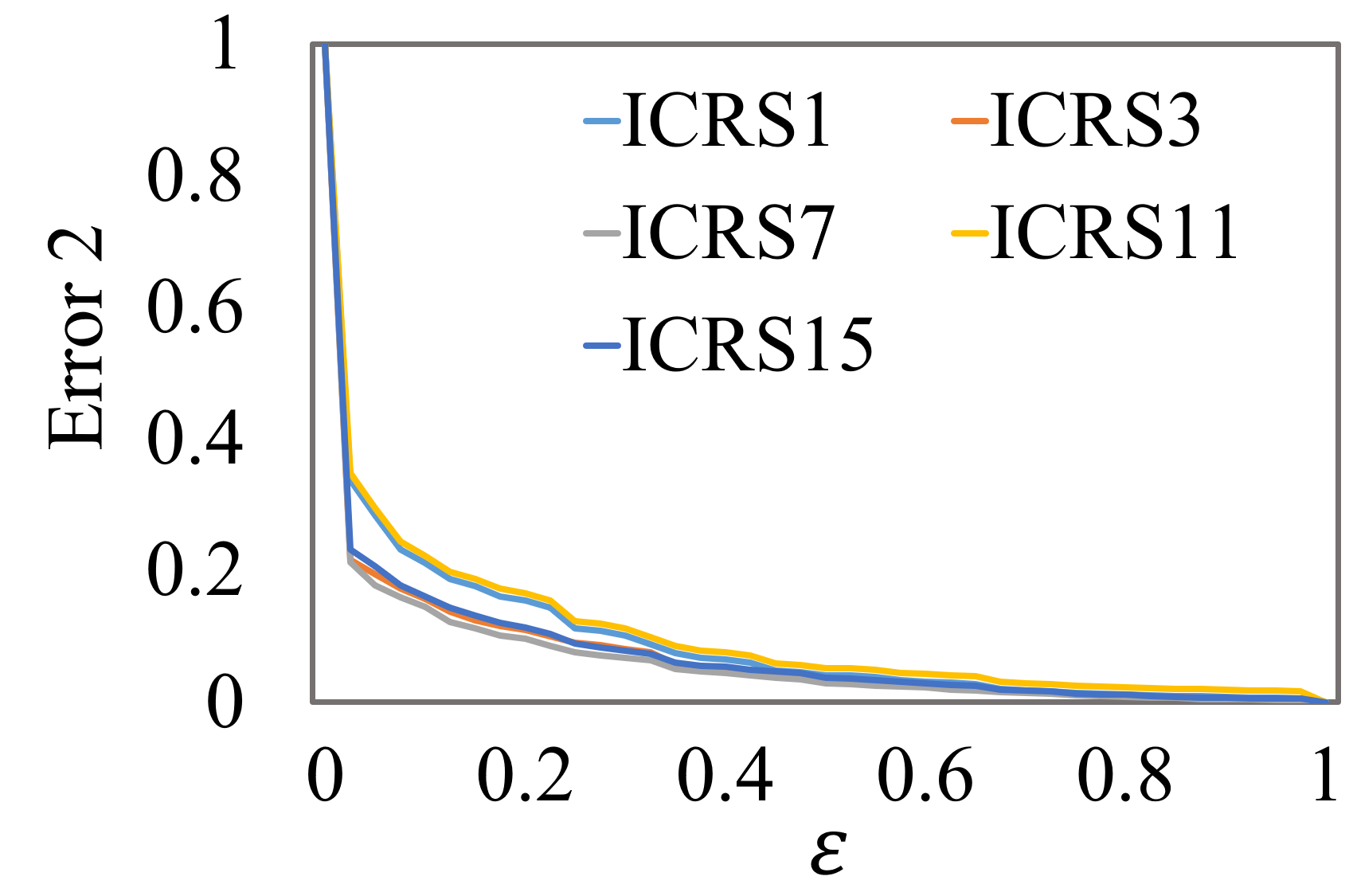}
	\caption{MovieLens 10M}
\end{subfigure}
\begin{subfigure}[c]{0.24\linewidth}
	\centering
	\includegraphics[width=\textwidth, height = 3.1cm]{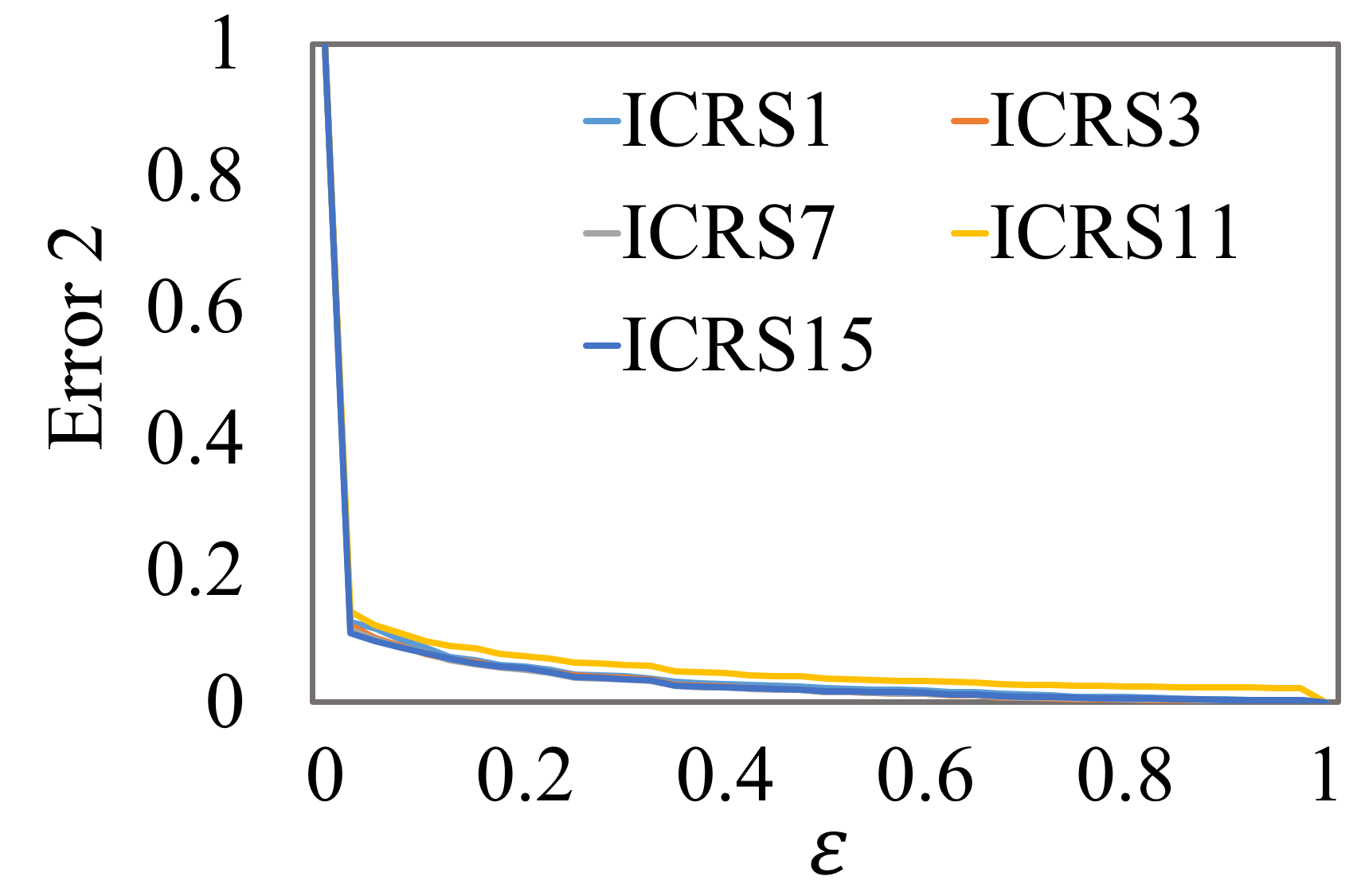}
	\caption{MovieLens 20M}
\end{subfigure}\\
\vspace{0.2cm}
\begin{subfigure}[c]{0.24\linewidth}
	\includegraphics[width=\textwidth, height = 3.1cm]{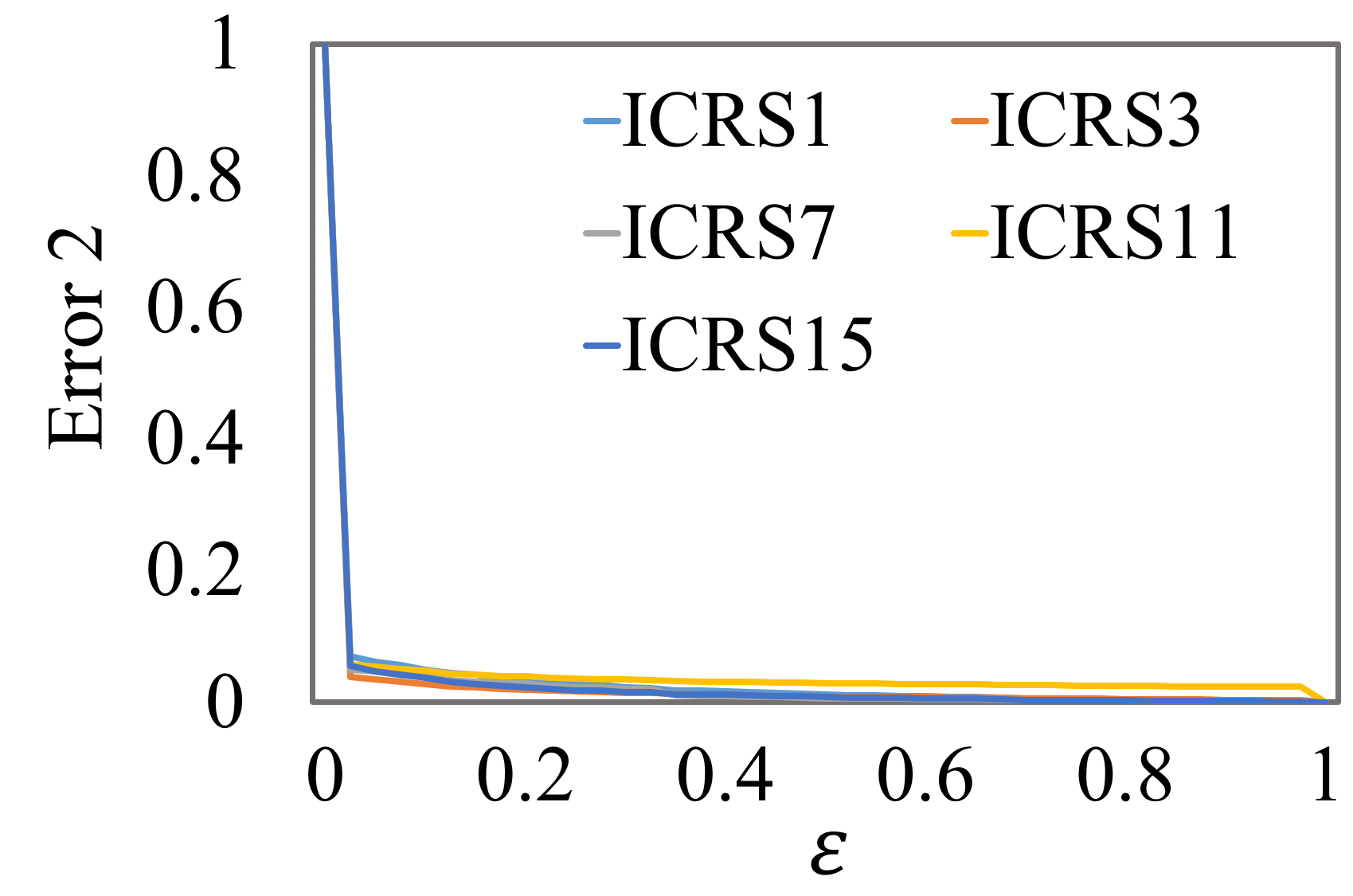}
	\caption{MovieLens 25M}
\end{subfigure}
\begin{subfigure}[c]{0.24\linewidth}
	\includegraphics[width=\textwidth, height = 3.1cm]{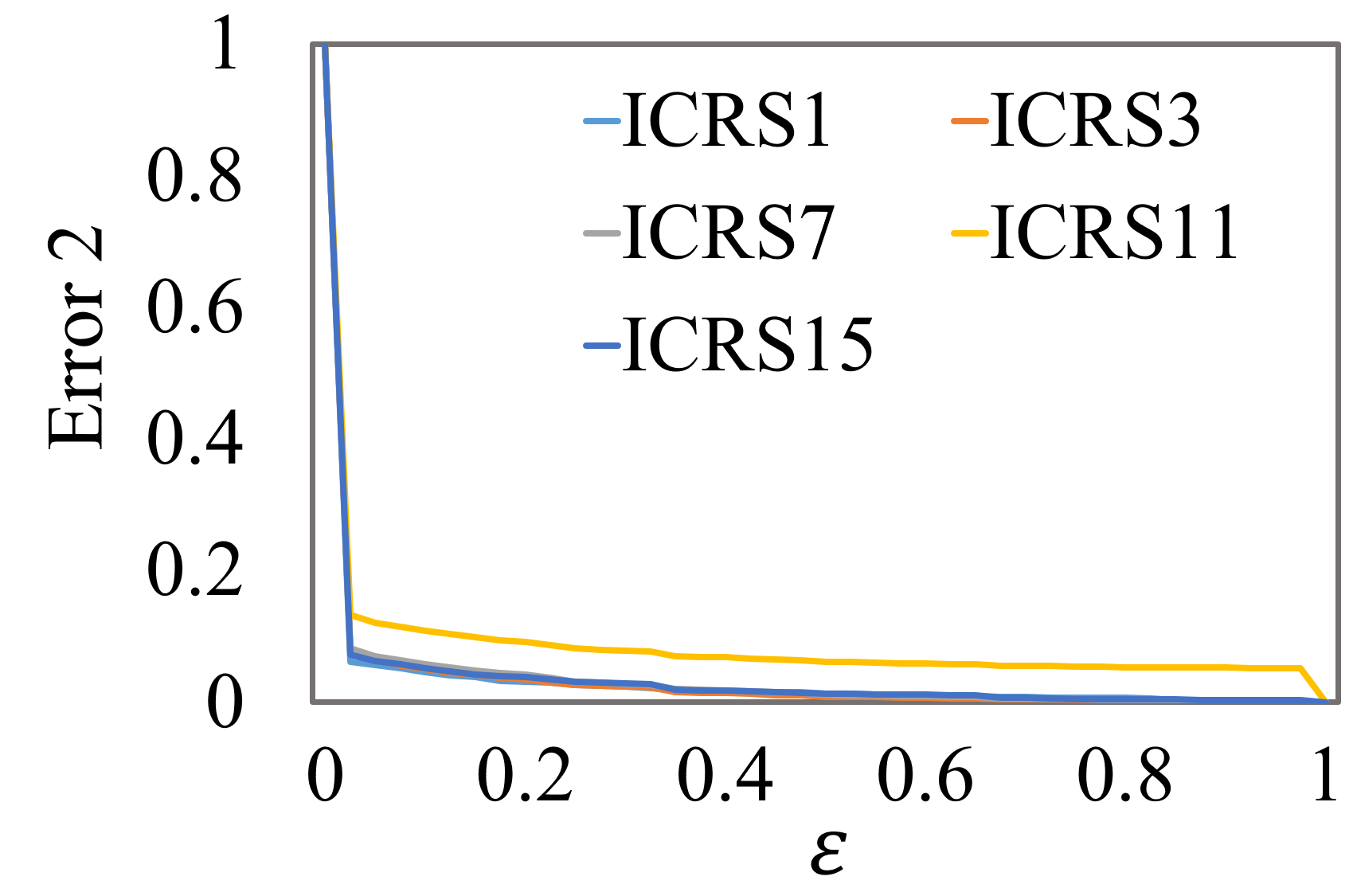}
	\caption{MovieLens-latest-V1}
\end{subfigure}
\begin{subfigure}[c]{0.24\linewidth}
	\includegraphics[width=\textwidth, height = 3.1cm]{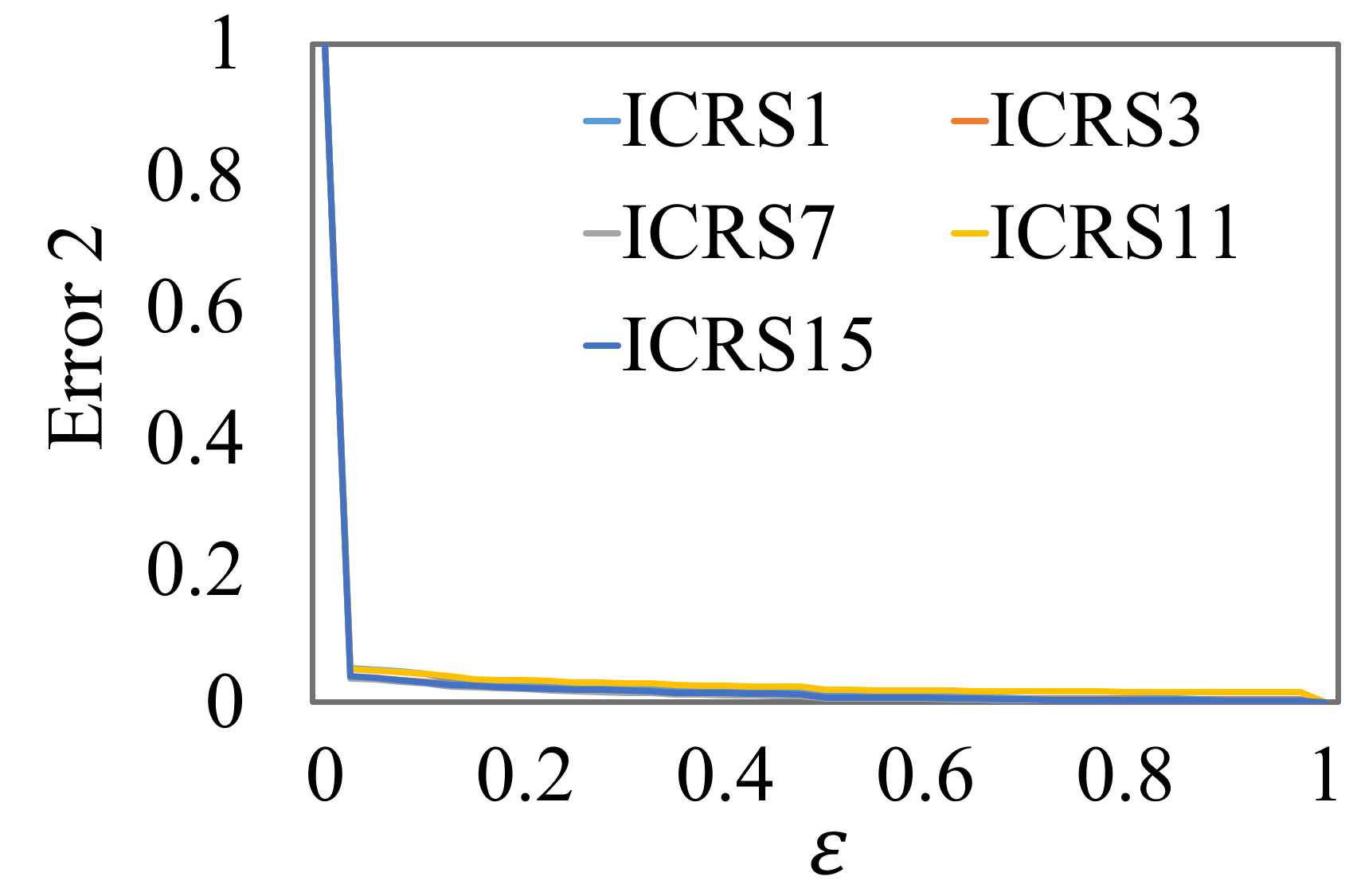}
	\caption{MovieLens-latest-V2}
\end{subfigure}
\hspace{0.24\linewidth}
\caption{Efficiency of recommendation for different datasets}
\label{fig:ICRS_err2}
\end{figure*}

\subsection{Comparative Analysis}
\label{ComparativeAnalysis}

In this section, we carried out experiments to demonstrate that the proposed methods achieve comparable results with significantly reduced execution times. 
Table~\ref{tab:AP_AUC_NDCG_RR} gives the findings related to ranking-based evaluation measures over seven datasets. Each result is composed of \textit{mean} and \textit{rank}. The rank reflects relative performance of an algorithm over a dataset for a given evaluation measure. In the case of ties, we have assigned the average rank. Furthermore, the entries in boldface highlight best results among all the algorithms being compared.

To carry out comparative analysis in more well-founded ways, we employed \textit{Friedman test} which is widely-accepted as the favorable statistical  test for comparing more than two algorithms over multiple data sets~\cite{demvsar2006statistical}. For each evaluation criterion, \textit{Friedman statistics} $F_F$ and the corresponding critical value are reported in Table \ref{ffTest}. It can be observed that at significance level $\alpha = 0.05$, Friedman test rejects the null hypothesis of “equal” performance for each evaluation metric. This leads to the use of post-hoc tests to assess the pairwise differences between two algorithms within a multiple comparison test. We use the Nemenyi test to check whether the proposed methods achieves a competitive performance against the algorithms being compared~\cite{demvsar2006statistical}.  The performance of two algorithms is significantly different if the corresponding average ranks differ by at least the critical difference $CD = q_{\alpha}\sqrt{ \frac{\mathcal{K}(\mathcal{K}+1)}{6\mathcal{N}}}$, where the value $q_{\alpha}$ is based on the Studentized range statistic divided by $\sqrt{2}$. For Nemenyi test with $\mathcal{K}=20$, we have $q_{\alpha}= 3.5438$ at significance level $\alpha = 0.05$ and thus $CD=11.2065$~\cite{demvsar2006statistical}.

\begin{table}[ht]
\renewcommand{\arraystretch}{1.2}
\centering
\caption{Summary of the Friedman Statistics $F_F(\mathcal{K}=20,\mathcal{N}=7)$ and the	Critical Value in Terms of Each Evaluation Metric ($\mathcal{K}$: \# Comparing Algorithms; $\mathcal{N}$: \# Data Sets).}
\begin{tabular}{llc}
	\toprule
	Metric &$F_F$&Critical Value ($\alpha = 0.05$)\\
	\toprule
	AP&	10.1378	
	&\multirow{4}{*}{1.6785}\\
	AUC&11.1020&\\
	NDCG&11.3810&\\
	RR&15.2027&\\
	\hline
\end{tabular}
\label{ffTest}
\end{table}

\begin{table*}
\renewcommand{\arraystretch}{1}
\centering		
\caption{Experimental results of each comparing algorithm in terms of AP, AUC, NDCG, and RR.}
\adjustbox{max width=\linewidth}{
	\begin{tabular}{lclclclclclclcl}
		\toprule
		\multirow{2}{*}{Comparing}&\multicolumn{14}{c}{AP}\\ \cline{2-15}
		algorithm & Personality-2018 && Flixsters && MovLens 10M &&	MovieLens 20M && MovieLens 25M &&	MovieLens-Latest& & MovieLens-Latest-V2 &  \\ \hline
		Precedence Mining & 0.19 & 12 & 0.15 & 14     & 0.06 & 20   & 0.03 & 20  & 0.10 & 20  & 0.01 & 20 & 0.08 & 20  \\
		CRS-Med           & 0.13 & 19  & 0.16 & 9      & 0.15 & 16   & 0.15 & 16  & 0.14 & 13  & 0.15 & 15 & 0.17 & 19  \\
		CRS-Max           & 0.15 & 14  & \textbf{0.20} & \textbf{1}      & 0.18 & 7.5  & 0.18 & 5   & \textbf{0.16} & \textbf{4.5} & \textbf{0.18} & \textbf{4}  & 0.21 & 6   \\
		ICRS1             & 0.23 & 8   & 0.19 & 2.5    & 0.15 & 16   & 0.15 & 16  & 0.13 & 16  & 0.15 & 15 & 0.19 & 16  \\
		ICRS2             & 0.14  & 16.5 & 0.08 & 18.5   & 0.16 & 12.5 & 0.16 & 12  & 0.14 & 13  & 0.15 & 15 & 0.20 & 12  \\
		ICRS3             & \textbf{0.26} & \textbf{1.5}  & 0.18 & 4.5  & 0.18 & 7.5  & 0.18 & 5   & \textbf{0.16} & \textbf{4.5} & 0.17 & 9  & 0.21 & 6   \\
		ICRS4             & \textbf{0.26} & \textbf{1.5}  & 0.18 & 4.5  & \textbf{0.19} & \textbf{2.5}  & 0.18 & 5   & \textbf{0.16} & \textbf{4.5} & \textbf{0.18} & \textbf{4}  & 0.21 & 6   \\
		ICRS5             & 0.22 & 10.5 & 0.19 & 2.5  & 0.15 & 16   & 0.15 & 16  & 0.13 & 16  & 0.15 & 15 & 0.20 & 12  \\
		ICRS6             & 0.16 & 13   & 0.08 & 18.5 & 0.15 & 16   & 0.16 & 12  & 0.13 & 16  & 0.15 & 15 & 0.19 & 16  \\
		ICRS7             & 0.24 & 5    & 0.16 & 9    & \textbf{0.19} & \textbf{2.5}  & 0.18 & 5   & \textbf{0.16} & \textbf{4.5} & \textbf{0.18} & \textbf{4}  & 0.21 & 6   \\
		ICRS8             & 0.24 & 5    & 0.15 & 14   & \textbf{0.19} & \textbf{2.5}  & \textbf{0.19} & \textbf{1}   & \textbf{0.16} & \textbf{4.5} & \textbf{0.18} & \textbf{4}  & 0.20 & 12  \\
		ICRS9             & 0.14 & 16.5 & 0.15 & 14   & 0.17 & 11   & 0.16 & 12  & 0.15 & 10  & 0.16 & 11 & 0.20 & 12  \\
		ICRS10            & 0.14 & 16.5 & 0.09 & 17   & 0.15 & 16   & 0.15 & 16  & 0.14 & 13  & 0.15 & 15 & 0.20 & 12  \\
		ICRS11            & 0.22 & 10.5 & 0.16 & 9    & 0.18 & 7.5  & 0.18 & 5   & \textbf{0.16} & \textbf{4.5} & \textbf{0.18} & \textbf{4}  & \textbf{0.22} & \textbf{1.5} \\
		ICRS12            & 0.23 & 8    & 0.15 & 14   & 0.18 & 7.5  & 0.18 & 5   & \textbf{0.16} & \textbf{4.5} & \textbf{0.18} & 4  & 0.21 & 6   \\
		ICRS13            & 0.14 & 16.5 & 0.15 & 14   & 0.18 & 7.5  & 0.17 & 9.5 & \textbf{0.16} & \textbf{4.5} & 0.17 & 9  & \textbf{0.22} & \textbf{1.5} \\
		ICRS14            & 0.07 & 20   & 0.05 & 20   & 0.14 & 19   & 0.14 & 19  & 0.11 & 19  & 0.13 & 19 & 0.19 & 16  \\
		ICRS15            & 0.24 & 5    & 0.16 & 9    & 0.18 & 7.5  & 0.17 & 9.5 & 0.15 & 10  & 0.17 & 9  & 0.21 & 6   \\
		ICRS16            & 0.23 & 8    & 0.16 & 9    & \textbf{0.19} & \textbf{2.5}  & 0.18 & 5   & 0.15 & 10  & \textbf{0.18} & \textbf{4}  & 0.21 & 6   \\
		ICRS17            & 0.25 & 3    & 0.17 & 6    & 0.16 & 12.5 & 0.15 & 16  & 0.12 & 18  & 0.15 & 15 & 0.18 & 18 \\
		\toprule
		%
		\multirow{2}{*}{Comparing}&\multicolumn{14}{c}{AUC}\\ \cline{2-15}
		algorithm & Personality-2018 && Flixsters && MovLens 10M &&	MovieLens 20M && MovieLens 25M &&	MovieLens-Latest& & MovieLens-Latest-V2 &  \\ \hline
		Precedence Mining & 0.92 & 2   & \textbf{0.96} & \textbf{1}    & 0.64 & 20  & 0.90 & 14   & \textbf{0.98} & \textbf{1}    & 0.65 & 20  & \textbf{0.92} & \textbf{1}    \\
		CRS-Med           & 0.62 & 16  & 0.87 & 7.5  & 0.72 & 19  & 0.75 & 20   & 0.77 & 20   & 0.75 & 19  & 0.67 & 20   \\
		CRS-Max           & 0.57 & 18  & 0.83 & 13.5 & 0.84 & 14  & 0.86 & 16   & 0.88 & 16   & 0.87 & 15  & 0.80 & 18   \\
		ICRS1             & \textbf{0.93} & \textbf{1}   & 0.91 & 2.5  & \textbf{0.90} & \textbf{3}   & \textbf{0.92} & \textbf{5.5}  & 0.93 & 6    & \textbf{0.93} & \textbf{4.5} & 0.86 & 6    \\
		ICRS2             & 0.67 & 15  & 0.60 & 17   & 0.83 & 15  & 0.87 & 15   & 0.89 & 15   & 0.88 & 14  & 0.84 & 14.5 \\
		ICRS3             & 0.91 & 4   & 0.88 & 5.5  & \textbf{0.90} & \textbf{3}   & \textbf{0.92} & \textbf{5.5}  & 0.94 & 2    & 0.92 & 11  & 0.86 & 6    \\
		ICRS4             & 0.91 & 4   & 0.89 & 4    & \textbf{0.90} & \textbf{3}   & \textbf{0.92} & \textbf{5.5}  & 0.92 & 11.5 & \textbf{0.93} & \textbf{4.5} & 0.85 & 11   \\
		ICRS5             & 0.91 & 4   & 0.91 & 2.5  & 0.89 & 9.5 & \textbf{0.92} & \textbf{5.5}  & 0.93 & 6    & 0.92 & 11  & 0.85 & 11   \\
		ICRS6             & 0.59 & 17  & 0.52 & 18   & 0.81 & 16  & 0.84 & 17   & 0.86 & 17   & 0.86 & 16  & 0.81 & 16   \\
		ICRS7             & 0.90 & 6   & 0.86 & 10   & \textbf{0.90} & \textbf{3}   & \textbf{0.92} & \textbf{5.5}  & 0.92 & 11.5 & \textbf{0.93} & \textbf{4.5} & 0.86 & 6    \\
		ICRS8             & 0.88 & 8.5 & 0.79 & 15   & 0.89 & 9.5 & \textbf{0.92} & \textbf{5.5}  & 0.92 & 11.5 & 0.92 & 11  & 0.84 & 14.5 \\
		ICRS9             & 0.87 & 11  & 0.87 & 7.5  & 0.89 & 9.5 & \textbf{0.92} & \textbf{5.5}  & 0.93 & 6    & \textbf{0.93} & \textbf{4.5} & 0.85 & 11   \\
		ICRS10            & 0.53 & 19  & 0.47 & 19   & 0.78 & 17  & 0.82 & 18.5 & 0.84 & 18   & 0.84 & 17  & 0.80 & 18   \\
		ICRS11            & 0.87 & 11  & 0.84 & 12   & 0.89 & 9.5 & 0.91 & 12   & 0.93 & 6    & \textbf{0.93} & \textbf{4.5} & 0.87 & 2.5  \\
		ICRS12            & 0.85 & 14  & 0.75 & 16   & 0.89 & 9.5 & 0.91 & 12   & 0.91 & 14   & 0.92 & 11  & 0.85 & 11   \\
		ICRS13            & 0.86 & 13  & 0.86 & 10   & \textbf{0.90} & \textbf{3}   & \textbf{0.92} & \textbf{5.5}  & 0.93 & 6    & \textbf{0.93} & \textbf{4.5} & 0.87 & 2.5  \\
		ICRS14            & 0.51 & 20  & 0.45 & 20   & 0.74 & 18  & 0.82 & 18.5 & 0.83 & 19   & 0.83 & 18  & 0.80 & 18   \\
		ICRS15            & 0.88 & 8.5 & 0.83 & 13.5 & 0.89 & 9.5 & \textbf{0.92} & \textbf{5.5}  & 0.92 & 11.5 & 0.92 & 11  & 0.85 & 11   \\
		ICRS16            & 0.87 & 11  & 0.86 & 10   & 0.89 & 9.5 & \textbf{0.92} & \textbf{5.5}  & 0.93 & 6    & \textbf{0.93} & \textbf{4.5} & 0.86 & 6    \\
		ICRS17            & 0.89 & 7   & 0.88 & 5.5  & 0.89 & 9.5 & 0.91 & 12   & 0.93 & 6    & \textbf{0.93} & \textbf{4.5} & 0.86 & 6    \\
		\toprule
		%
		\multirow{2}{*}{Comparing}&\multicolumn{14}{c}{NDCG}\\ \cline{2-15}
		algorithm & Personality-2018 && Flixsters && MovLens 10M &&	MovieLens 20M && MovieLens 25M &&	MovieLens-Latest& & MovieLens-Latest-V2 &  \\ \hline
		Precedence Mining & 0.66 & 12   & 0.53 & 13.5 & 0.16 & 20   & 0.37 & 20   & 0.48 & 19.5 & 0.27 & 20   & 0.41 & 20   \\
		CRS-Med           & 0.58 & 18   & 0.55 & 7    & 0.54 & 17.5 & 0.53 & 18   & 0.52 & 15.5 & 0.53 & 17.5 & 0.48 & 19   \\
		CRS-Max           & 0.59 & 16.5 & \textbf{0.61} & \textbf{1}    & \textbf{0.59} & \textbf{4}    & \textbf{0.58} & \textbf{3.5}  & \textbf{0.56} & \textbf{2.5}  & \textbf{0.58} & \textbf{1}    & 0.54 & 7    \\
		ICRS1             & 0.70 & 5.5  & 0.57 & 3    & 0.56 & 13.5 & 0.55 & 13.5 & 0.53 & 12.5 & 0.55 & 12   & 0.52 & 14.5 \\
		ICRS2             & 0.59 & 16.5 & 0.44 & 17   & 0.55 & 15.5 & 0.55 & 13.5 & 0.52 & 15.5 & 0.53 & 17.5 & 0.53 & 12   \\
		ICRS3             & \textbf{0.72} & \textbf{1.5}  & 0.56 & 5    & \textbf{0.59} & \textbf{4}    & 0.57 & 8    & 0.55 & 7.5  & 0.57 & 5.5  & 0.54 & 7    \\
		ICRS4             & \textbf{0.72} & \textbf{1.5}  & 0.57 & 3    & \textbf{0.59} & \textbf{4}    & \textbf{0.58} & \textbf{3.5}  & 0.55 & 7.5  & 0.57 & 5.5  & 0.54 & 7    \\
		ICRS5             & 0.69 & 9.5  & 0.57 & 3    & 0.56 & 13.5 & 0.55 & 13.5 & 0.53 & 12.5 & 0.54 & 14.5 & 0.53 & 12   \\
		ICRS6             & 0.60 & 15   & 0.43 & 18.5 & 0.55 & 15.5 & 0.55 & 13.5 & 0.51 & 18   & 0.54 & 14.5 & 0.51 & 17   \\
		ICRS7             & 0.70 & 5.5  & 0.55 & 7    & \textbf{0.59} & \textbf{4}    & \textbf{0.58} & \textbf{3.5}  & 0.55 & 7.5  & 0.57 & 5.5  & 0.54 & 7    \\
		ICRS8             & 0.70 & 5.5  & 0.52 & 15.5 & \textbf{0.59} & \textbf{4}    & \textbf{0.58} & \textbf{3.5}  & 0.55 & 7.5  & 0.57 & 5.5  & 0.53 & 12   \\
		ICRS9             & 0.62 & 13.5 & 0.54 & 10.5 & 0.58 & 9.5  & 0.55 & 13.5 & 0.55 & 7.5  & 0.56 & 10.5 & 0.54 & 7    \\
		ICRS10            & 0.57 & 19   & 0.43 & 18.5 & 0.54 & 17.5 & 0.54 & 17   & 0.52 & 15.5 & 0.54 & 14.5 & 0.52 & 14.5 \\
		ICRS11            & 0.69 & 9.5  & 0.54 & 10.5 & \textbf{0.59} & \textbf{4}    & \textbf{0.58} & \textbf{3.5}  & \textbf{0.56} & \textbf{2.5}  & 0.57 & 5.5  & \textbf{0.56} & \textbf{1}    \\
		ICRS12            & 0.68 & 11   & 0.52 & 15.5 & \textbf{0.59} & \textbf{4}    & \textbf{0.58} & \textbf{3.5}  & \textbf{0.56} & \textbf{2.5}  & 0.57 & 5.5  & 0.54 & 7    \\
		ICRS13            & 0.62 & 13.5 & 0.53 & 13.5 & 0.58 & 9.5  & 0.57 & 8    & \textbf{0.56} & \textbf{2.5}  & 0.57 & 5.5  & 0.55 & 2.5  \\
		ICRS14            & 0.50 & 20   & 0.38 & 20   & 0.51 & 19   & 0.51 & 19   & 0.48 & 19.5 & 0.50 & 19   & 0.51 & 17   \\
		ICRS15            & 0.70 & 5.5  & 0.54 & 10.5 & 0.58 & 9.5  & 0.56 & 10   & 0.54 & 11   & 0.56 & 10.5 & 0.54 & 7    \\
		ICRS16            & 0.70 & 5.5  & 0.54 & 10.5 & 0.58 & 9.5  & 0.57 & 8    & 0.55 & 7.5  & 0.57 & 5.5  & 0.55 & 2.5  \\
		ICRS17            & 0.70 & 5.5  & 0.55 & 7    & 0.57 & 12   & 0.55 & 13.5 & 0.52 & 15.5 & 0.54 & 14.5 & 0.51 & 17  \\
		\toprule
		%
		\multirow{2}{*}{Comparing}&\multicolumn{14}{c}{RR}\\ \cline{2-15}
		algorithm & Personality-2018 && Flixsters && MovLens 10M &&	MovieLens 20M && MovieLens 25M &&	MovieLens-Latest& & MovieLens-Latest-V2 &  \\ \hline
		Precedence Mining & 0.82 & 12   & 0.19 & 20   & 0.31 & 20   & 0.24 & 20   & 0.34 & 20   & 0.22 & 20   & 0.25 & 20   \\
		CRS-Med           & 0.78 & 14   & 0.60 & 2    & 0.59 & 14.5 & 0.64 & 9    & 0.59 & 5.5  & 0.62 & 6.5  & 0.55 & 18.5 \\
		CRS-Max           & \textbf{0.97} & \textbf{1}    & \textbf{0.92} & \textbf{1}    & \textbf{0.76} & \textbf{1}    & \textbf{0.76} & \textbf{1}    & \textbf{0.71} & \textbf{1}    & \textbf{0.75} & \textbf{1}    & \textbf{0.72} & \textbf{1}    \\
		ICRS1             & 0.85 & 10.5 & 0.51 & 4.5  & 0.60 & 12.5 & 0.60 & 14   & 0.56 & 12   & 0.59 & 12   & 0.60 & 13   \\
		ICRS2             & 0.51 & 19   & 0.37 & 16   & 0.53 & 18   & 0.57 & 16.5 & 0.49 & 17.5 & 0.51 & 18   & 0.58 & 15.5 \\
		ICRS3             & 0.91 & 2.5  & 0.47 & 7.5  & 0.65 & 4    & 0.65 & 6.5  & 0.59 & 5.5  & 0.62 & 6.5  & 0.63 & 5.5  \\
		ICRS4             & 0.91 & 2.5  & 0.47 & 7.5  & 0.64 & 6.5  & 0.66 & 4    & 0.61 & 3    & 0.62 & 6.5  & 0.62 & 7.5  \\
		ICRS5             & 0.85 & 10.5 & 0.52 & 3    & 0.59 & 14.5 & 0.60 & 14   & 0.57 & 10   & 0.59 & 12   & 0.61 & 10.5 \\
		ICRS6             & 0.62 & 17   & 0.32 & 18   & 0.54 & 17   & 0.60 & 14   & 0.49 & 17.5 & 0.53 & 17   & 0.56 & 17   \\
		ICRS7             & 0.86 & 7.5  & 0.45 & 10.5 & 0.63 & 8.5  & 0.67 & 2.5  & 0.56 & 12   & 0.61 & 9.5  & 0.61 & 10.5 \\
		ICRS8             & 0.86 & 7.5  & 0.42 & 15   & 0.65 & 4    & 0.65 & 6.5  & 0.59 & 5.5  & 0.62 & 6.5  & 0.62 & 7.5  \\
		ICRS9             & 0.69 & 15   & 0.48 & 6    & 0.60 & 12.5 & 0.56 & 18   & 0.54 & 16   & 0.56 & 16   & 0.61 & 10.5 \\
		ICRS10            & 0.57 & 18   & 0.36 & 17   & 0.56 & 16   & 0.57 & 16.5 & 0.55 & 14.5 & 0.57 & 14.5 & 0.58 & 15.5 \\
		ICRS11            & 0.86 & 7.5  & 0.44 & 12.5 & 0.64 & 6.5  & 0.67 & 2.5  & 0.58 & 8.5  & 0.63 & 3    & 0.67 & 2    \\
		ICRS12			  & 0.79 & 13   & 0.43 & 14   & 0.63 & 8.5  & 0.65 & 6.5  & 0.63 & 2    & 0.61 & 9.5  & 0.61 & 10.5 \\
		ICRS13            & 0.65 & 16   & 0.44 & 12.5 & 0.65 & 4    & 0.62 & 10.5 & 0.59 & 5.5  & 0.63 & 3    & 0.63 & 5.5  \\
		ICRS14            & 0.40 & 20   & 0.29 & 19   & 0.47 & 19   & 0.50 & 19   & 0.43 & 19   & 0.46 & 19   & 0.55 & 18.5 \\
		ICRS15            & 0.88 & 5    & 0.46 & 9    & 0.61 & 10.5 & 0.61 & 12   & 0.56 & 12   & 0.59 & 12   & 0.64 & 3.5  \\
		ICRS16            & 0.89 & 4    & 0.45 & 10.5 & 0.67 & 2    & 0.65 & 6.5  & 0.58 & 8.5  & 0.63 & 3    & 0.64 & 3.5  \\
		ICRS17            & 0.86 & 7.5  & 0.51 & 4.5  & 0.61 & 10.5 & 0.62 & 10.5 & 0.55 & 14.5 & 0.57 & 14.5 & 0.59 & 14	\\
		\toprule
	\end{tabular}
}
\label{tab:AP_AUC_NDCG_RR}
\end{table*}

\begin{figure*}
\centering
\begin{subfigure}{\textwidth}
	\centering
	\includegraphics[width=\textwidth,height=1.8in]{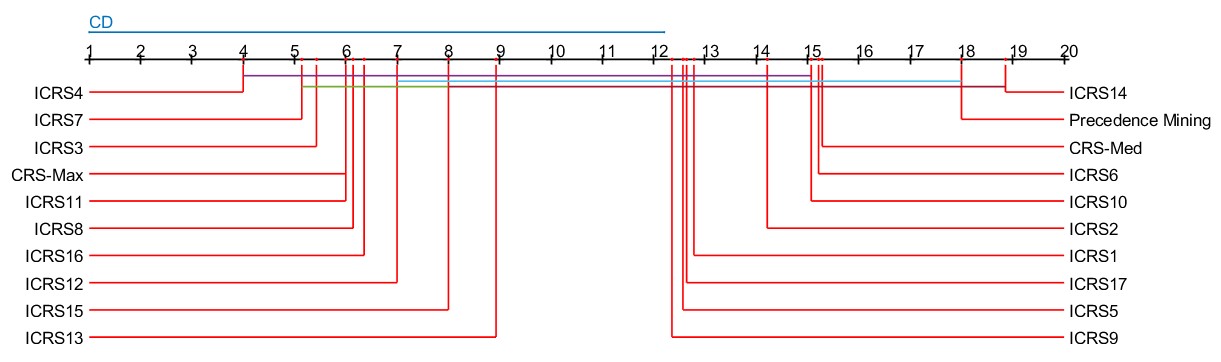}
	\caption{AP}
	\label{fig:sub2}
\end{subfigure}\\
\begin{subfigure}{\textwidth}
	\centering
	\includegraphics[width=\textwidth,height=1.8in]{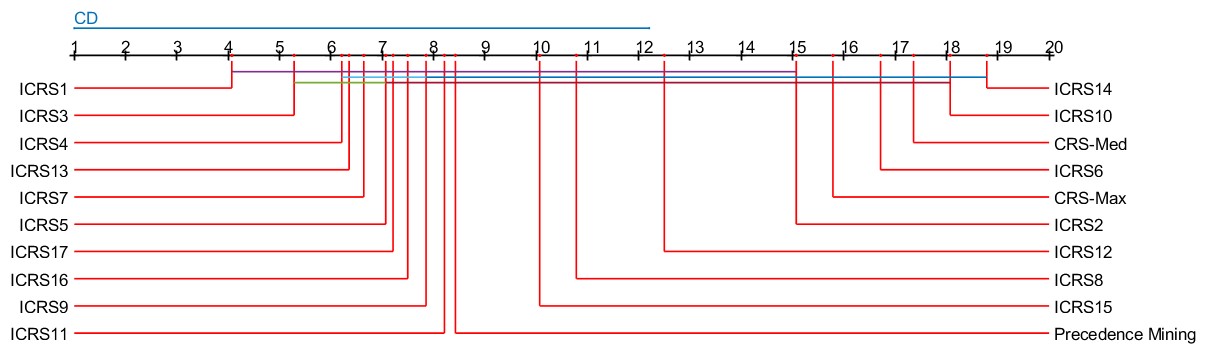}
	\caption{AUC}
	\label{fig:sub4}
\end{subfigure}\\
\begin{subfigure}{\textwidth}
	\centering
	\includegraphics[width=\textwidth,height=1.8in]{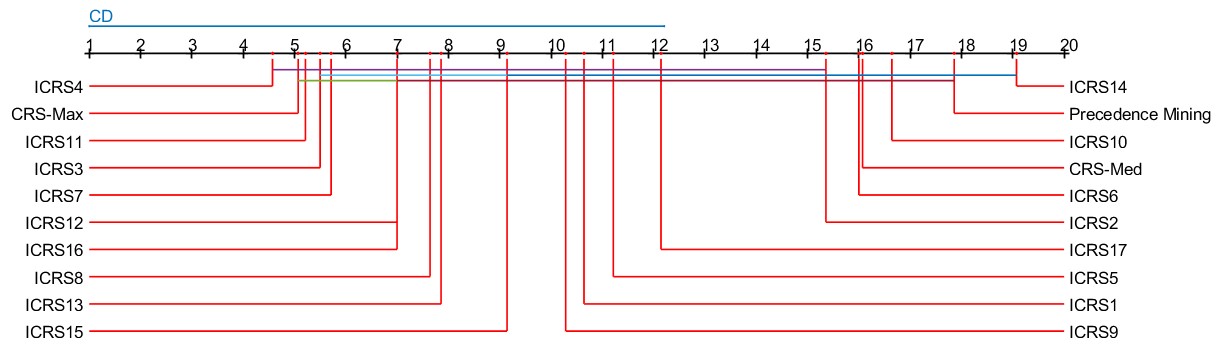}
	\caption{NDCG}
	\label{fig:sub5}%
\end{subfigure}\\
\begin{subfigure}{\textwidth}
	\centering
	\includegraphics[width=\textwidth,height=1.8in]{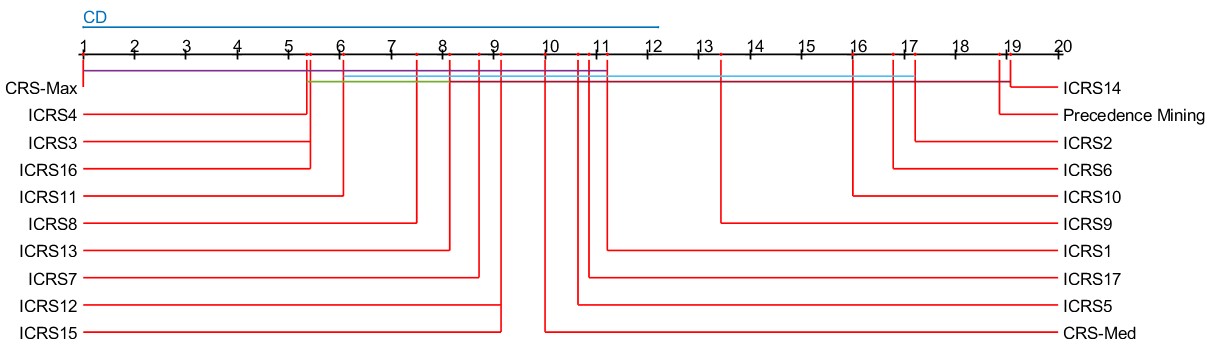}
	\caption{RR}
	\label{fig:sub6}
\end{subfigure}
\caption{CD diagrams of the comparing algorithms under each evaluation criterion.}
\label{CDDiagram}
\end{figure*}

Figure~\ref{CDDiagram} gives the CD diagrams~\cite{demvsar2006statistical} for each evaluation criterion, where the average rank of each comparing algorithm is marked along the axis (lower ranks to the left). It can be seen from the Figure~\ref{CDDiagram} that the proposed methods achieve  better performance than \textit{CRS-Med} and \textit{Precedence Mining} models over most of the evaluation metrics. We can also observe that the proposed approaches achieve similar performance to \textit{CRS-Max} or even derive a better rank in most cases, especially the median-based inductive approaches (\emph{ICRS3},\emph{ICRS7}, \emph{ICRS11}, and  \emph{ICRS15}) and mean-based inductive approaches (\emph{ICRS4},\emph{ICRS8}, \emph{ICRS12}, and \emph{ICRS16}). We have also observed similar results with Maximum  strategy based methods (\emph{ICRS1}, \emph{ICRS5}, \emph{ICRS9}, \emph{ICRS13}, and \emph{ICRS17}), whereas Minimum strategy based approaches (\emph{ICRS2}, \emph{ICRS6}, \emph{ICRS10}, and  \emph{ICRS14}) performing poorly among the seventeen proposed approaches.  
{
This comprehensive analysis reveals that Median and Mean-based approaches capture the true precedence relations of a new item with respect to a user profile compared to the Minimum strategy. The reason could be  that sometimes there is a higher chance of a user consuming items that are not of his regular interest but due to other users' influence (like family, friends, etc.)  or  situational context . These items do not follow good precedence relations with users' actual interests. Therefore, it is evident that there is a higher chance of minimum-based strategies capturing such precedence relations and that do not represent the allure of a new item concerning the user profile.
In other words, there may be some noisy/outlier points in the user profile, and minimum-based strategies are more attractive to these points and therefore not suitable to measure (non)conformity. Though the same could be valid with the \emph{Maximum} based strategies, it is more likely that even a single item in the profile can influence to consume another item that follows higher precedence relations. 
For example, a Deep Learning course may have a higher precedence relation with a Machine Learning course, and that could be an influencing factor for a student to opt for a Deep Learning course irrespective of other courses in the student profile. Experiment results corroborate our claims.}



\begin{figure*}
\centering
\begin{subfigure}{\textwidth}
\centering
\includegraphics[width=\textwidth,height=2.5in]{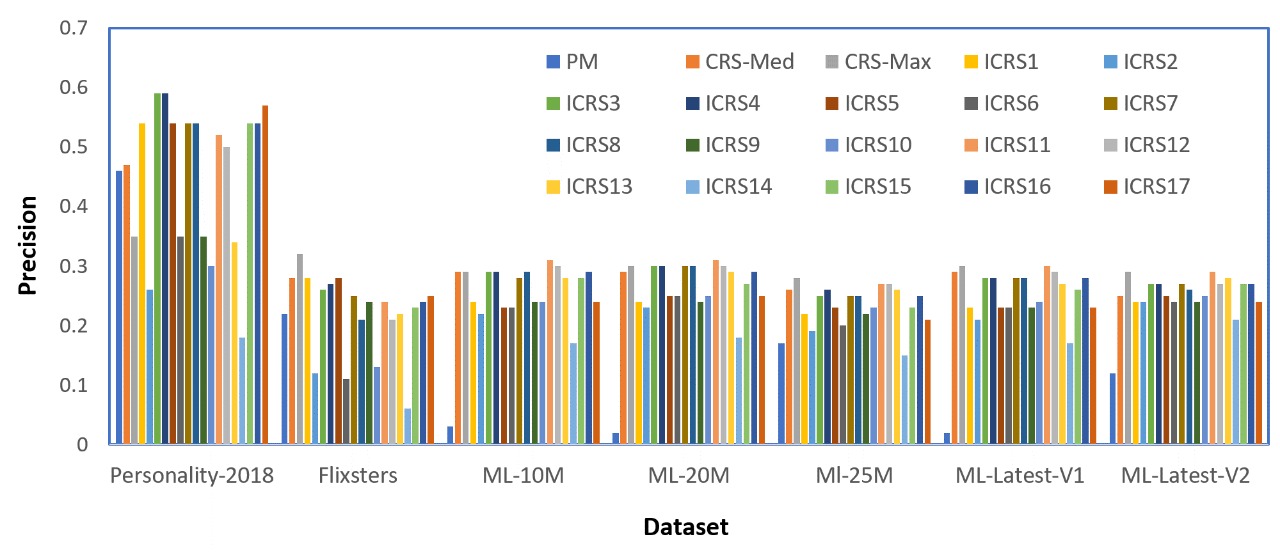}
\caption{precision@10}
\label{fig:ICRS_precI}
\end{subfigure}\\
\vspace{0.1in}
\begin{subfigure}{\textwidth}
\centering
\includegraphics[width=\textwidth,height=2.5in]{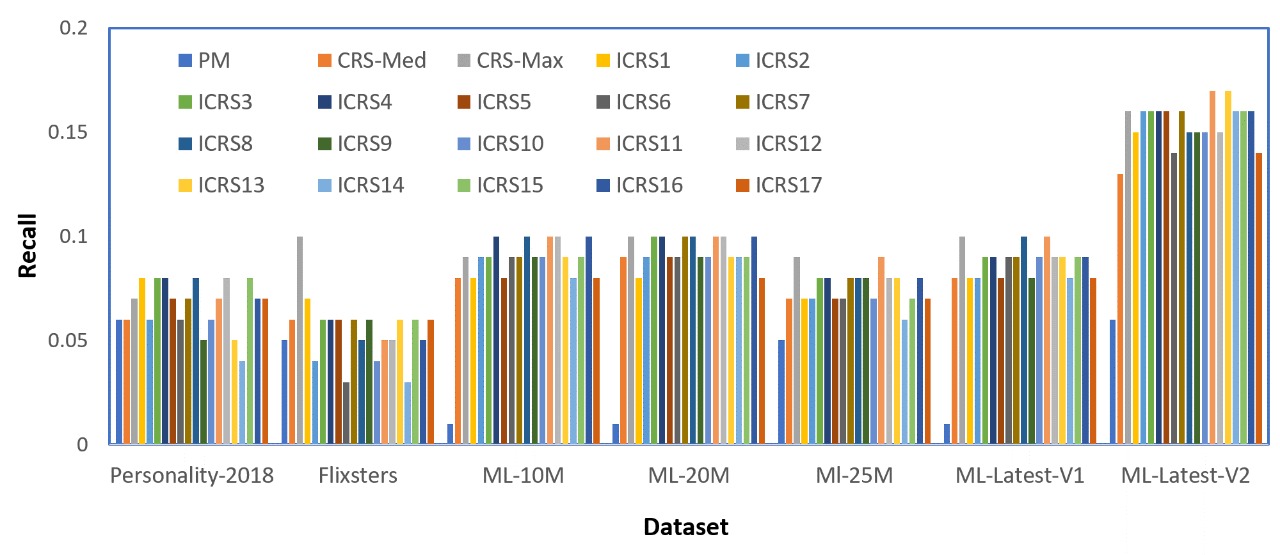}
\caption{recall@10}
\label{fig:ICRS_recalI}
\end{subfigure}\\
\vspace{0.1in}
\begin{subfigure}{\textwidth}
\centering
\includegraphics[width=\textwidth,height=2.5in]{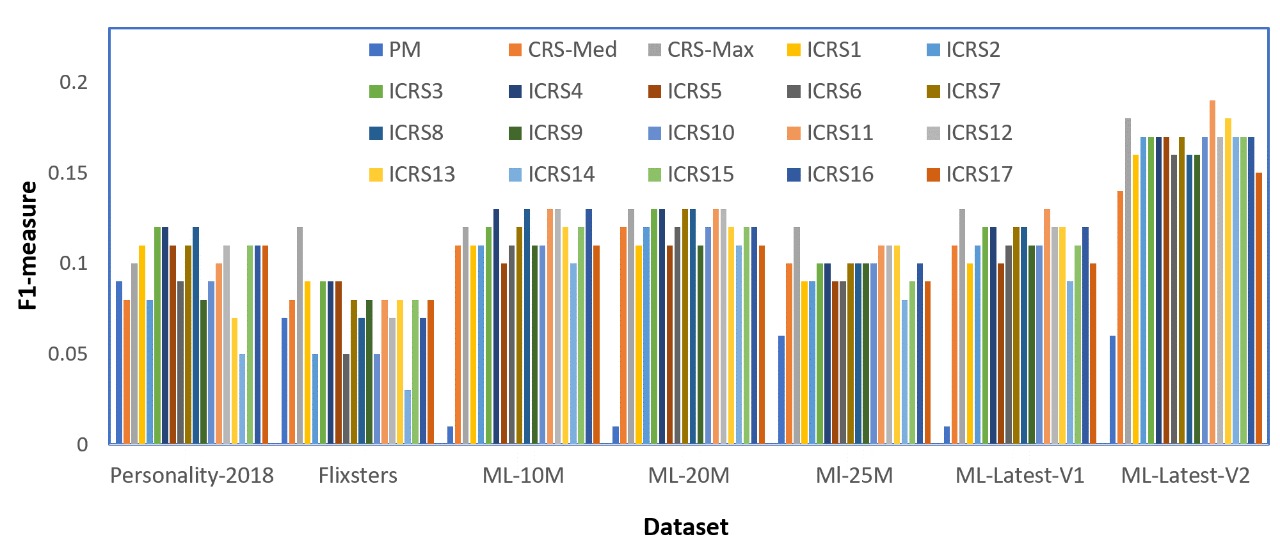}
\caption{F$_1$@10}
\label{fig:ICRS_fI}%
\end{subfigure}\\

\caption{ Performance of each comparing algorithm in terms of top-k recommendation metrics.}
\label{fig:ICRS_PRF1}
\end{figure*}

\vspace{0.2cm}
In the second set of experiments, we compare the performance in terms of top-$k$ recommendation measures, namely  \emph{precision@10}, \emph{recall@10} and \emph{F1@10} (Figure~\ref{fig:ICRS_PRF1}). We observe the similar results with varying the number of recommendations. It can be observed from the figures that conformal approaches methods outperform the underlying precedence mining model. Furthermore, findings reveal that inductive variants are comparable with the  \emph{CRS-max} and  \emph{CRS-med}. 
Finally, we  compared the execution time (in milliseconds) of the different approaches. It can be seen from the Figure~\ref{fig:ICRS_time} that the inductive conformal recommender systems are much faster than traditional conformal recommender systems and better than the precedence mining model. Altogether, the results corroborate our claim that the inductive variant achieves a similar level of accuracy compared to its counterparts but significantly reduces the execution time.

\begin{figure*}[!htbp]
\centering
\includegraphics[width=\textwidth, height = 2.5in]{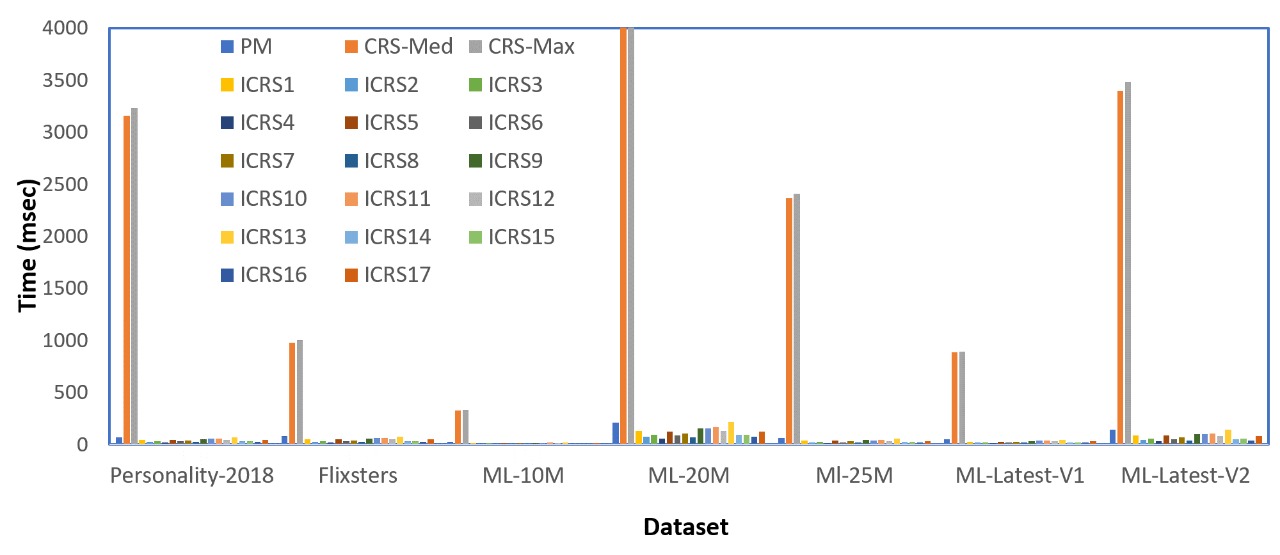}
\caption{ Execution time comparison for different datasets}
\label{fig:ICRS_time}
\end{figure*}

\section{Conclusions and Future Work}
\label{sec:conclusion}
In this paper, we propose an inductive variants of the conformal recommender system that complements the recommendation by quantifying the (un)reliability in predictions. One natural limitation with the existing transductive variants is the computation time that prevents their applicability in the time constraint domains. We address this limitation and propose an inductive variant that maintains the same moderate level of predictive accuracy but reduces the computation time to a large extent. Our conformal approach exemplifies confidence in terms of the bounds on the error.  Conformity/nonconformity measures are key component of any conformal recommendation framework, and the prediction accuracy largely depends on how well these measures are defined. In this work, we examined sevneteen different (non)conformity measures using the precedence relations among objects. We theoretically proved that the proposed (non)conformity measures adhere to the principle of validity under certain assumptions. Further, we emphasized our theoretical results with an empirical demonstration. 
Rigorous experiments on several real-world datasets demonstrated that the inductive conformal recommendation algorithms outperform the precedence mining based recommender system and non-inductive methods in terms of execution time.  We observed that a few of the inductive variants outperforming the other approaches in terms of other crucial measures of the recommender system when the basic assumptions of the model are satisfied.  

The current proposal sets a lot of scope for future research.  Attaining the notion of confidence in different recommendation models by determining suitable (non)conformity measures is one of the exacting directions for enthusiastic researchers.  Investigating the conformal prediction for group recommender systems is a direction worth studying.  Exploring the conformal approach for different matrix factorization-based methods is another exciting direction to pursue.


\section*{Acknowledgements}
Venkateswara Rao Kagita is supported by the NITW-RSM grant, NIT Warangal. Vikas Kumar is supported by the Start-up Research Grant (SRG) under grant number \linebreak SRG/2021/001931 and the Faculty Research Programme Grant, University of Delhi under grant number IoE/2021/12/FRP. We would also like to thank the anonymous reviewers whose comments/suggestions helped improve and clarify this manuscript to a large extent.

\bibliographystyle{unsrt}
\bibliography{ecai}

%
\newpage
\appendix
\section{Formal defintions of (non)conformity }
We give the formal definitions of the (non)conformity measures described in Section 3 as follows. 

\vspace{-0.5cm}
\begin{align*}
\noindent
CM2(o_h) &= \underset{o_i\in O_j^t}{minimum}~  PC(o_i, o_h).\\
CM3(o_h) &= \underset{o_i\in O_j^t}{median}~ PC(o_i, o_h).\\
CM4(o_h) &= \underset{o_i\in O_j^t}{mean}~ PC(o_i, o_h).\\
CM5(o_h) &= \underset{o_i\in O_j^t}{maximum}~  PC(o_i, o_h).\\
\noalign{\vskip1pt}	
CM6(o_h) &= \underset{o_i\in O_j^t}{ minimum} ~PP(o_h\mid o_i).\\
CM7(o_h) &= \underset{o_i\in O_j^t}{ median}  ~PP(o_h\mid o_i).\\
CM8(o_h) &= \underset{o_i\in O_j^t}{ mean}  ~PP(o_h\mid o_i).\\
CM9(o_h) &= \underset{o_i\in O_j^t}{ maximum} ~ PP(o_h\mid o_i).\\
\vspace{1cm}
CM10(o_h) &= \underset{o_i\in O_j^t}{minimum}~  \frac{PC(o_i, o_h)}{Sup(o_i) -PC(o_h, o_i) }.\\
CM11(o_h) &= \underset{o_i\in O_j^t}{median}~ \frac{PC(o_i, o_h)}{Sup(o_i) -PC(o_h, o_i) }.\\
CM12(o_h) &= \underset{o_i\in O_j^t}{mean}~ \frac{PC(o_i, o_h)}{Sup(o_i) -PC(o_h, o_i) }.\\
CM13(o_h) &= \underset{o_i\in O_j^t}{maximum}~  \frac{PC(o_i, o_h)}{Sup(o_i) -PC(o_h, o_i) }.\\
\vspace{1cm}
NCM14(o_h) &= \underset{o_i\in O_j^t}{minimum}~  \frac{Sup(o_i) -
PC(o_i, o_h)}{n_u}.\\
NCM15(o_h) &= \underset{o_i\in O_j^t}{median}~ \frac{Sup(o_i) -
PC(o_i, o_h)}{n_u}.\\
NCM16(o_h) &= \underset{o_i\in O_j^t}{mean}~ \frac{Sup(o_i) -
PC(o_i, o_h)}{n_u}.\\
NCM17(o_h) &= \underset{o_i\in O_j^t}{maximum}~  \frac{Sup(o_i) -
PC(o_i, o_h)}{n_u}.
\end{align*}
\end{document}